\documentclass{article}

\usepackage{microtype}
\usepackage{graphicx}
\usepackage{subcaption}
\usepackage{booktabs} 

\usepackage{hyperref}

\usepackage[preprint]{icml2026}

\usepackage{amsmath}
\usepackage{amssymb}
\usepackage{mathtools}
\usepackage{amsmath} 
\usepackage{colortbl} 

\usepackage{booktabs} 
\usepackage{bm} 

\definecolor{lightblue}{rgb}{0.81, 0.94, 1.0}
\definecolor{cvprblue}{rgb}{0.21,0.49,0.74}
\definecolor{aligncol}{RGB}{20,20,20} 
\definecolor{repelcol}{RGB}{120,120,120} 

\usepackage{hyperref}       %
\hypersetup{
    colorlinks=true,
    linkcolor=red,
    citecolor=cvprblue,      
    urlcolor=cyan
}
\usepackage[capitalize,noabbrev]{cleveref} 
\usepackage{tikz}  
\usetikzlibrary{arrows.meta,calc,angles,quotes,matrix,positioning,shapes.geometric}
\tikzset{>=Stealth}

\usepackage{float}
\usepackage{wrapfig}
\usepackage{lipsum} 
\usepackage{subcaption}
\usepackage{multirow}
\usepackage{derivative}
\usepackage{latexsym}
\usepackage{amssymb} 
\usepackage{pifont} 
\newcommand{\cmark}{\ding{51}}
\usepackage{nicefrac}       
\usepackage{amsthm}         

\usepackage[capitalize,noabbrev]{cleveref}

\theoremstyle{plain}
\newtheorem{theorem}{Theorem}[section]
\newtheorem{proposition}[theorem]{Proposition}

\theoremstyle{definition}

\newtheorem{assumption}[theorem]{Assumption}
\theoremstyle{remark}

\usepackage{mdframed}

\usepackage{pifont}
\newcommand{\xmark}{\ding{55}} 

\newtheorem{innercustomproposition}{Proposition}
\newenvironment{customproposition}[1]
  {\renewcommand\theinnercustomproposition{#1}\innercustomproposition}
  {\endinnercustomproposition}

\usepackage{enumitem} 

\usepackage[capitalize,noabbrev]{cleveref} 

\usepackage{tikz}  
\usetikzlibrary{arrows.meta,calc,angles,quotes,matrix,positioning,shapes.geometric}
\tikzset{>=Stealth}

\usepackage{graphicx}
\usepackage{ragged2e}
\definecolor{ForestGreen}{RGB}{34,139,34}

\DeclareRobustCommand{\circledgreen}[1]{%
  \tikz[baseline=(c.base)]\node (c)
    [circle,
     draw=ForestGreen,
     fill=ForestGreen!15,
     inner sep=1.2pt]{\footnotesize #1};}

\usepackage[most]{tcolorbox}
\definecolor{navyblue}{RGB}{0,45,114}

\newtcolorbox{remarkbox}[1][]{%
  colback=navyblue!5,    
  colframe=navyblue,     
  boxrule=0.5pt,         %
  arc=2mm,               %
  left=1mm,right=1mm,top=1mm,bottom=1mm, %
  title=\textbf{#1}      %
}

\definecolor{mColor1}{rgb}{0.9,0.9,0.9}
\definecolor{lightblue}{rgb}{0.81, 0.94, 1.0}

\usepackage[textsize=tiny]{todonotes}

\icmltitlerunning{Towards Uniformity and Alignment for Multimodal Representation Learning}

\begin{document}

\twocolumn[
  \icmltitle{Towards Uniformity and Alignment for Multimodal Representation Learning}



  \icmlsetsymbol{equal}{*}

  \begin{icmlauthorlist}
\icmlauthor{Wenzhe Yin}{uva}
\icmlauthor{Pan Zhou}{smu}
\icmlauthor{Zehao Xiao}{uva}
\icmlauthor{Jie Liu}{uva}
\icmlauthor{Shujian Yu}{vu,uit}
\icmlauthor{Jan-Jakob Sonke}{nki}
\icmlauthor{Efstratios Gavves}{uva}
  \end{icmlauthorlist}

\icmlaffiliation{uva}{University of Amsterdam}
\icmlaffiliation{nki}{The Netherlands Cancer Institute}
\icmlaffiliation{vu}{Vrije Universiteit Amsterdam}
\icmlaffiliation{uit}{The Arctic University of Norway}
\icmlaffiliation{smu}{Singapore Management University}

\icmlcorrespondingauthor{Pan Zhou}{panzhou@smu.edu.sg}
  \vskip 0.3in
]



\printAffiliationsAndNotice{}  

\begin{abstract}
Multimodal representation learning aims to construct a shared embedding space in which heterogeneous modalities are semantically aligned. Despite strong empirical results, InfoNCE-based objectives introduce inherent conflicts that yield \emph{distribution gaps} across modalities.
In this work, we identify two conflicts in the multimodal regime, both exacerbated as the number of modalities increases: (i) an \emph{alignment–uniformity} conflict, whereby the repulsion of uniformity undermines pairwise alignment, and (ii) an \emph{intra-alignment} conflict, where aligning multiple modalities induces competing alignment directions. To address these issues, we propose a principled decoupling of alignment and uniformity for multimodal representations, providing a conflict-free recipe for multimodal learning that simultaneously supports discriminative and generative use cases without task-specific modules. We then provide a theoretical guarantee that our method acts as an efficient proxy for a global Hölder divergence over multiple modality distributions, and thus reduces the distribution gap among modalities. Extensive experiments on retrieval and UnCLIP-style generation demonstrate consistent gains. 





\end{abstract}

\section{Introduction}


Multimodal representation learning~\citep{ruan2023accommodating,girdhar2023imagebind} aims to construct a shared embedding space where semantically related signals from different modalities (e.g., image, text, audio, video, speech) are well aligned. A landmark example is CLIP~\citep{radford2021learning}, which employs an InfoNCE objective to align paired image–text representations by maximizing similarity for positive pairs while pushing negative pairs apart. This framework has since been extended beyond two modalities. For instance, ImageBind~\citep{girdhar2023imagebind}, VAST~\citep{chen2023vast}, and LanguageBind~\citep{zhu2023languagebind} incorporate additional
{modalities (e.g., depth and audio)} into a common space, while GRAM~\citep{cicchetti2024gramian} generalizes the cosine similarity to the Gramian volume among multiple modalities in InfoNCE and achieves promising performance.

Despite notable successes, InfoNCE-based methods exhibit inherent conflicts that induce distribution gaps~\citep{liang2022mind,shi2023towards,yin2025distributional}. Consequently, UnCLIP-type generative models (e.g., DALL-E 2~\citep{ramesh2022hierarchical} and Kandinsky~\citep{razzhigaev2023kandinsky}) add a diffusion module to transform CLIP embeddings. Prior work~\citep{yin2025distributional} shows that, in vision–language learning, this gap arises from a conflict between \emph{uniformity} and \emph{alignment}~\citep{wang2020understanding}: the uniformity term spreads embeddings on the unit hypersphere, whereas the alignment term pulls positive (multimodal) pairs together. 
However, in the multimodal regime, it remains unclear {how} these conflicts evolve as the number of modalities \(M\) increases, {which} is crucial for balancing uniformity (discriminability for retrieval) and alignment (closing cross-modal distribution gaps for generation). This is challenging because each modality is jointly determined by multiple cross-modal interactions, making the conflicts hard not only to characterize but also to resolve without sacrificing either retrieval discriminability or generative alignment. To tackle the issue, we explicitly quantify these internal conflicts, which explains how distribution gaps worsen as $M$ grows and directly guides the design of a conflict-free objective.

\textbf{Contributions.} In this work, we systematically analyze and address the inherent conflicts in multimodal InfoNCE that give rise to modality and distributional gaps. Our contributions are four-fold and are highlighted below.

{First, we provide a theoretical analysis of the InfoNCE in multimodal settings (\(M \geq 3\)), which is non-trivial due to a more complex and heterogeneous representation geometry.} We theoretically formalize two distinct conflicts: (1) an alignment-uniformity conflict (\(\zeta_a\)), where uniformity forces oppose alignment, exacerbating distributional gaps across modalities (see Fig.~\ref{fig:conflict-illustration}~(a) and Proposition~\ref{prop:alignment-uniformity-conflict} in Sec.~\ref{theoreticalanalysis}), and (2) an intra-alignment conflict (\(\chi_a\)), driven by non-collinear positive embeddings across modalities, which widens the modality gap as the number  \(M\)  of modalities increases (see Fig.~\ref{fig:conflict-illustration}~(b) and Proposition~\ref{prop:intra-alignmen-conflict} in Sec.~\ref{theoreticalanalysis}). Together, these conflicts explain why multimodal InfoNCE struggles to scale: the same objective that enforces global uniformity undermines the alignment of positive pairs, especially as the modality count $M$ grows.

{Second, to resolve these issues, we propose \textit{UniAlign}, which provides a principled decoupling of \textbf{uni}formity and \textbf{align}ment.} Specifically, we enforce \emph{intra-modality uniformity} within each modality’s samples, ensuring uniform coverage on the unit hypersphere and preventing representation collapse.
In parallel, we introduce an \emph{anchor-based alignment} strategy that aligns embeddings of the same sample across modalities with respect to an anchor modality.
{This explicitly avoids competing alignment directions}, thereby closing modality gaps without introducing competing forces. 
Therefore, UniAlign enhances both cross-modal discriminative and generative capability.



Third, beyond this geometric intuition, we provide a theoretical justification that our method minimizes the distribution gap. Specifically, we introduce a global Hölder divergence applicable to an arbitrary number of modality distributions. We then connect our decoupled losses to this divergence, showing that the intra-modality uniformity and anchor-based alignment terms act as efficient computational proxies for minimizing it, thereby providing formal theoretical justification.


Extensive experimental results demonstrate the consistent superiority of our UniAlign over InfoNCE-based baselines in representation quality, retrieval accuracy, and generation fidelity. Without additional task-specific modules, the learned embeddings support both discriminative (cross-modal retrieval) and generative (UnCLIP-style conditional generation) tasks, yielding around \(2\) R@1 gains and \(10\text{--}40\) lower FID, respectively. These results confirm that our decoupled principle not only resolves the modality and distributional gaps introduced by InfoNCE, but also provides a scalable recipe for robust and versatile multimodal learning. 

\section{Multimodal Conflict Analysis}
\label{theoreticalanalysis}
\begin{figure*}[t!]
  \centering
  \includegraphics[width=0.9\linewidth]{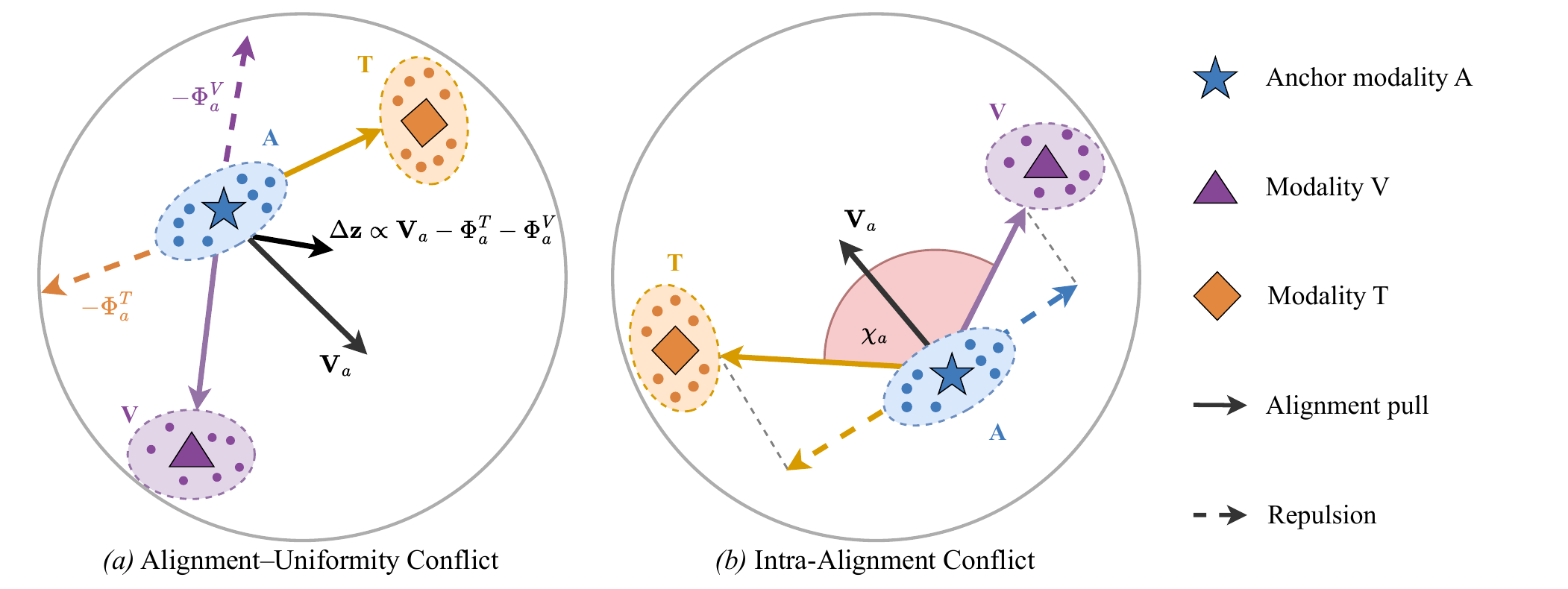}
  \caption{\textbf{Two conflicts of multi-modal InfoNCE.}
  (a) \emph{Alignment--uniformity:} positives are pulled together yet repelled by the uniformity force; 
 (b) \emph{Intra-alignment:} non-collinear positives induce angular tension. Both grow with \(M\).}\label{fig:conflict-illustration}
\vskip -4mm
\end{figure*}

In this section, we first revisit prior analyses of the InfoNCE objective for two modalities (vision and language). {We then quantify two types of conflicts in multimodal learning and present a principled framework that characterizes how conflicts evolve as the number of modalities increases.}

\subsection{Uniformity and Alignment Conflict of InfoNCE.}
 Let \(M\) be the number of modalities and \(B\) the batch size. For sample index \(i\in\{1,\dots,B\}\) and modality \(m\in\{1,\dots,M\}\), denote the \(\ell_2\)-normalized embedding by \( \mathbf{Z}^{(m)} =\{\mathbf{z}_i^{(m)} \in\mathbb{R}^d \}_{i=1}^B\). The generalized multi-modal InfoNCE objective~\citep{oord2018representation} (over all pairs) is:
\begin{equation}
\label{eq:infonce}
\begin{aligned}
&\mathcal{L}_{\text{InfoNCE}} 
=  -\frac{1}{\sum_{m\ne n}w_{mn}}
\sum_{i=1}^B\sum_{m\ne n} w_{mn}\, \log p_{ii}^{(mn)} \\
&\qquad\qquad \text{with }\  p_{ik}^{(mn)}=\frac{\exp({\mathbf{z}_i^{(m)}}^\top \mathbf{z}_k^{(n)}/\tau)}{\sum_{\ell=1}^{B}\exp({\mathbf{z}_i^{(m)}}^\top \mathbf{z}_\ell^{(n)}/\tau)},
\end{aligned}
\end{equation}
where \(w_{mn}>0\) denotes weight and \(\tau>0\) is the temperature. This loss has been extensively used in recent multimodal applications~\citep{girdhar2023imagebind,guzhov2022audioclip}. Then for two modalities,  InfoNCE can be decomposed into \emph{alignment} and \emph{uniformity} \citep{wang2020understanding}:
\begin{equation}
\label{eq:uniform-align}
\begin{aligned}
&\text{Alignment:}\ \ \mathbb{E}_{p_{\text{pair}}}\!\big[\|\mathbf{z}^{(1)}-\mathbf{z}^{(2)}\|_2^2\big],
\\
&\text{Uniformity:}\ \ \log\mathbb{E}_{p_{\text{data}}}\!\big[\exp(-\|\mathbf{z}^{(1)}-\mathbf{z}^{(2)}\|_2^2/2\tau)\big],
\end{aligned}
\end{equation}
{where \(p_{\text{pair}}\) is defined on cross-modal pairs, and $p_{\text{data}}$ is the overall data distribution regardless of pairing.}
Uniformity spreads embeddings over the unit hypersphere, thereby avoiding collapse and promoting semantic coverage, while alignment pulls paired cross-modal representations together to enforce semantic consistency. In vision–language learning, \citet{yin2025distributional} demonstrates that uniformity \emph{across} modalities (``inter-uniformity'') conflicts with the alignment term, resulting in a systemic distributional gap (see evidence in Appendix~\ref{app:evidence-conflict}).

However, when extending to more modalities, the analysis is insufficient to present the relationship between conflict degree and the number of modalities, which is important to understand the learning mechanism of multimodal representation. Due to the more complex representation space geometry of multiple modalities, where each modality is influenced by multiple factors, quantifying the conflict in multimodal representation learning is challenging.

\subsection{Systematic Multimodal Conflict Analysis}


We first reveal two modes of conflict in multimodal learning with InfoNCE, and then prove that the two conflicts become severe when the number of modalities $M$ increases by Proposition~\ref{prop:alignment-uniformity-conflict} and~\ref{prop:intra-alignmen-conflict}. 

To quantify the conflict in multimodal learning, we first choose one modality $\mathbf{z}^{(a)}$ as the anchor, and analyze how it is
influenced by other modalities from the gradient perspective. 
Differentiating Eq.~(\ref{eq:infonce}) with respect to an anchor \(\mathbf{z}_i^{(a)}\) exposes a ``push–pull'' structure. For a modality pair \((a\!\to\! n)\),
\begin{equation}
\label{eq:grad-vphi}
\begin{aligned}
& \nabla_{\mathbf{z}_i^{(a)}} \mathcal{L}
= - \underbrace{\sum_{n\neq a}\frac{w_{an}}{\tau}\,\mathbf{z}_i^{(n)}}_{\mathbf{V}_a}
+ \underbrace{\sum_{n\neq a}\frac{w_{an}}{\tau}\sum_{k=1}^{B} p_{ik}^{(an)}\,\mathbf{z}_k^{(n)}}_{\bm{\Phi}_a},
\end{aligned}
\end{equation}
where the first term $\mathbf{V}_a$
aggregates only the matched positives across modalities and thus acts as an attraction (alignment) force, and the second term
$\bm{\Phi}_a$
is a softmax-weighted mixture over the batch and captures the repulsion induced by negatives, i.e., the uniformity-related force.

Eq.~(\ref{eq:grad-vphi}) exposes two levels of conflicts. 
(i) \emph{Inter-modality alignment–uniformity conflict}: when the uniformity push and alignment pull are directionally aligned, i.e., $\langle \mathbf{V}_a,\mathbf{\Phi}_a\rangle>0$, the term $-\mathbf{V}_a+\mathbf{\Phi}_a$ cancels in the gradient, yielding a small update for $\mathbf{z}^{(a)}$ (see Fig.~\ref{fig:conflict-illustration} (a)).
{This conflict is rooted in \emph{inter-modality} uniformity, which induces repulsion across modalities and leads to a distribution gap.} 
{(ii) \emph{Intra-alignment conflict}: the net alignment force weakens when the positive targets are not collinear (Fig.~\ref{fig:conflict-illustration}(b)). Since \(\mathbf{V}_a=\sum_{n\neq a}\frac{w_{an}}{\tau}\mathbf{z}^{(n)}\) aggregates these positive pulls via a vector sum, collinear positives reinforce each other and yield a large \(\|\mathbf{V}_a\|\), whereas non-collinear or opposing positives partially cancel, reducing the alignment signal.}



\textbf{Conflict quantification (\(\zeta_a,\chi_a\)).} 
To quantitatively analyze these two conflicts, we define the \emph{alignment-uniformity conflict} \(\zeta_a\in[-1,1]\) to measure directional opposition between the alignment and uniformity forces, and introduce  the intra-alignment conflict \(\chi_a\in[0,1]\) to quantify cancellation among non-collinear positive pulls within \(\mathbf{V}_a\) :
\begin{equation}
\begin{aligned}
	& \zeta_a \triangleq \cos\!\big(\mathbf{V}_a,\bm{\Phi}_a\big)={\mathbf{V}_a^\top \bm{\Phi}_a}/{\big(\|\mathbf{V}_a\|_2\,\|\bm{\Phi}_a\|_2\big)}.
	\\
	&\chi_a \triangleq 1-{\|\mathbf{V}_a\|_2}/{\big(\sum\nolimits_{n\neq a} w_{an}/\tau\big)},
\end{aligned}
\end{equation}
A high positive $\zeta_a$ (near 1) indicates severe conflict, which occurs when hard negatives lie in the same direction as positives. $\chi_a$ {measures how much the positive ``pull'' vectors within $\mathbf{V}_a$ cancel due to non-collinearity}. A value of $\chi_a=0$ indicates perfect alignment (no conflict), whereas $\chi_a \to 1$ indicates severe conflict. 

To study how the alignment–uniformity conflict scales with the number of modalities M, we next introduce a mild structural assumption on each per-modality uniformity component. Intuitively, it separates a systematic component that is coherent across modalities from residual variations that are largely modality-specific.

\begin{assumption}[Systematic conflict per-modality ]\label{assump:sys-conflict}
Let $\hat{\mathbf{V}}_a = \mathbf{V}_a/\|\mathbf{V}_a\|$ denote the unit alignment direction for anchor $a$.
For each modality $n\neq a$, the uniformity component admits the decomposition
\begin{equation}
\bm{\Phi}_a^{(n)} \;=\; c_n\,\hat{\mathbf{V}}_a \;+\; \bm{\varepsilon}_n,
\end{equation}
where
$c_n \triangleq \langle \bm{\Phi}_a^{(n)}, \hat{\mathbf{V}}_a\rangle$ quantifies the magnitude of systematic conflict from modality $n$ in this direction, and satisfies $c_n \ge c_0 $ for positive constant $c_0$. The residuals $\bm{\varepsilon}_n$ are zero-mean, mutually independent, and have bounded covariance.
\end{assumption}

\emph{Assumption Justification.} Most negatives in large-batch multimodal training are semantically unrelated and approximately isotropic, so their contributions largely average out as zero-mean noise. A small fraction of hard negatives, however, concentrates near the positive alignment direction and yields a consistent projection onto \(\hat{\mathbf{V}}_a\). This motivates decomposing each term into a systematic component \(c_n\) (hard negatives) plus a residual \(\bm{\varepsilon}_n\) (easy negatives and noise), and implies that as \(M\) increases, the systematic parts add coherently while residuals average out (see Appendix~\ref{app:alignment-uni-conflict}).


\begin{proposition}[Alignment–Uniformity Conflict]\label{prop:alignment-uniformity-conflict}
Let $\bm{\Phi}_a=\sum_{n\neq a}\bm{\Phi}_a^{(n)}$ be the total uniformity force on anchor $a$, and define
$\zeta_a \!=\! \cos\!\big(\mathbf{V}_a,\bm{\Phi}_a\big)$. 
Under Assumption~\ref{assump:sys-conflict}, the alignment–uniformity conflict converges to its maximum as the number of modalities $M$ increases:
\begin{equation}    
\mathbb{E}[\zeta_a] \;=\; \mathbb{E}\!\left[\cos\!\big(\mathbf{V}_a,\bm{\Phi}_a\big)\right] 
\;\longrightarrow\; 1
\qquad \text{as } M\to\infty .
\end{equation}
\end{proposition}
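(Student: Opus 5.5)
Under Assumption~\ref{assump:sys-conflict}, summing the per-modality decompositions over the $M-1$ non-anchor modalities gives
\[
\bm{\Phi}_a=C_M\,\hat{\mathbf{V}}_a+\mathbf{E}_M,\qquad C_M\triangleq\sum_{n\neq a}c_n\ \ge\ (M-1)c_0,\qquad \mathbf{E}_M\triangleq\sum_{n\neq a}\bm{\varepsilon}_n .
\]
The proof rests on the fact that the systematic part grows linearly in $M$, while the residual grows only like $\sqrt{M}$.

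\textbf{Step 1: exact cosine formula.} Because $\zeta_a$ depends on $\mathbf{V}_a$ only through its direction $\hat{\mathbf{V}}_a$, the norm of $\mathbf{V}_a$ never enters. Writing $\mathbf{E}_M=e_\parallel\hat{\mathbf{V}}_a+\mathbf{E}_\perp$ with $e_\parallel=\langle \mathbf{E}_M,\hat{\mathbf{V}}_a\rangle$, we get
\[
\zeta_a=\frac{C_M+e_\parallel}{\sqrt{(C_M+e_\parallel)^2+\|\mathbf{E}_\perp\|^2}} .
\]
The definition $c_n=\langle\bm{\Phi}_a^{(n)},\hat{\mathbf{V}}_a\rangle$ forces $\langle\bm{\varepsilon}_n,\hat{\mathbf{V}}_a\rangle=0$, so in fact $e_\parallel=0$ and $\zeta_a=\big(1+\|\mathbf{E}_M\|^2/C_M^2\big)^{-1/2}$. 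I will keep $e_\parallel$ in the argument anyway, so that it does not depend on this orthogonality.

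\textbf{Step 2: residual control.} Mutual independence, zero mean, and a uniform covariance bound $\operatorname{tr}\operatorname{Cov}(\bm{\varepsilon}_n)\le\sigma^2$ give $\mathbb{E}\|\mathbf{E}_M\|^2=\sum_{n\neq a}\mathbb{E}\|\bm{\varepsilon}_n\|^2\le (M-1)\sigma^2$, because the cross terms vanish. Chebyshev/Markov then gives, for any $\delta>0$,
\[
\Pr\big(\|\mathbf{E}_M\|>\delta C_M\big)\le \frac{(M-1)\sigma^2}{\delta^2 (M-1)^2c_0^2}=\frac{\sigma^2}{\delta^2(M-1)c_0^2}\to 0 .
\]
Hence $\|\mathbf{E}_M\|/C_M\to0$ in probability, and by Step 1 (the map $r\mapsto(1+r^2)^{-1/2}$, or more generally $\zeta_a\ge (1-r)/\sqrt{(1+r)^2+r^2}$ with $r=\|\mathbf{E}_M\|/C_M$, is continuous and equals $1$ at $r=0$) we get $\zeta_a\to1$ in probability.

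\textbf{Step 3: from probability to expectation.} Since $|\zeta_a|\le 1$, bounded convergence gives $\mathbb{E}[\zeta_a]\to1$. Explicitly,
\[
1-\mathbb{E}[\zeta_a]\le \big(1-g(\delta)\big)+2\Pr\big(\|\mathbf{E}_M\|>\delta C_M\big),
\]
where $g(\delta)\to1$ as $\delta\to0$. Alternatively, Jensen applied to the convex function $(1+x)^{-1/2}$ yields the quantitative bound $\mathbb{E}[\zeta_a]\ge\big(1+\sigma^2/((M-1)c_0^2)\big)^{-1/2}$, which gives a $1-O(1/M)$ rate.

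\textbf{Main obstacle.} The delicate point is that $\hat{\mathbf{V}}_a$ itself depends on the positives of all modalities, so it is random and varies with $M$. I would read the assumption as saying that the $\bm{\varepsilon}_n$ are zero-mean and independent conditional on $\hat{\mathbf{V}}_a$, run Steps 2--3 conditionally, and then take the outer expectation. The constants $c_0$ and $\sigma^2$ must not depend on $M$, which I take to be part of the assumption.
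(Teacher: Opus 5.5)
Your proposal is correct and follows essentially the same route as the paper. Orthogonality of $\bm{\varepsilon}_n$ to $\hat{\mathbf{V}}_a$ gives the exact formula $\zeta_a=(1+\|\mathbf{E}_M\|^2/C_M^2)^{-1/2}$; the residual energy grows only like $M$ while $C_M^2\ge (M-1)^2c_0^2$; and convergence in probability plus bounded convergence then yields $\mathbb{E}[\zeta_a]\to 1$. Your Jensen bound $\mathbb{E}[\zeta_a]\ge\bigl(1+\sigma^2/((M-1)c_0^2)\bigr)^{-1/2}$, which gives an explicit $1-O(1/M)$ rate, and your reading of the assumption conditionally on $\hat{\mathbf{V}}_a$ are useful refinements that the paper omits. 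The one slip is that your fallback inequality $\zeta_a\ge(1-r)/\sqrt{(1+r)^2+r^2}$ for the non-orthogonal case holds only when $r\le 1$, which is harmless because you use it only near $r=0$.
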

See its proof in Appendix~\ref{app:alignment-uni-conflict}. This shows that even if the conflict from each modality ($c_n$) is small, their systematic accumulation inevitably causes the total, observable conflict ($\zeta_a$) to become severe.

\begin{proposition}[Intra-alignment Conflict]
\label{prop:intra-alignmen-conflict}
The expected intra-alignment conflict, $\mathbb{E}[\chi_a]$, is governed by $M$ and the average pairwise alignment \(\bar{\mu} = \mathbb{E}[\mathbf{z}_i^{(m)\top} \mathbf{z}_i^{(n)}] \in [0, 1] \) for \(m \neq n\) between modalities:
\begin{equation}
	\label{eq:conflict_scaling_law}
	\mathbb{E}[\chi_a] \ \ge\ 1 - \sqrt{({1+(M-2)\bar{\mu}})/({M-1})}.
\end{equation}
For imperfect alignment (\(\bar{\mu} < 1\)), the conflict increases with the number of modalities \(M\) and admits a non-zero asymptotic lower bound:
\begin{equation}
\liminf\nolimits_{M \to \infty} \mathbb{E}[\chi_a] \ \ge\ 1 - \sqrt{\bar{\mu}}.
\end{equation}
\end{proposition}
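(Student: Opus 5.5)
The plan is to bound $\mathbb{E}\|\mathbf{V}_a\|_2$ from above by Jensen's inequality and then translate that bound into the stated lower bound on $\mathbb{E}[\chi_a]$. Following the setting implicit in the statement, which depends only on $M$ and $\bar\mu$, we take equal weights $w_{an}=w$. Then $\sum_{n\neq a} w_{an}/\tau = (M-1)w/\tau$, and
\begin{equation*}
\chi_a = 1-\frac{\|\sum_{n\neq a}\mathbf{z}_i^{(n)}\|_2}{M-1}.
\end{equation*}
If the weights are not equal, the same argument should go through with weighted averages in place of uniform ones, but the clean formula needs uniform weights. Linearity gives $\mathbb{E}[\chi_a]=1-\mathbb{E}\|\mathbf{S}\|_2/(M-1)$ with $\mathbf{S}=\sum_{n\neq a}\mathbf{z}_i^{(n)}$. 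It therefore suffices to upper bound $\mathbb{E}\|\mathbf{S}\|_2$.

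First, by Jensen's inequality (concavity of the square root), $\mathbb{E}\|\mathbf{S}\|_2\le\sqrt{\mathbb{E}\|\mathbf{S}\|_2^2}$. Second, we expand the square using the unit norm of the embeddings:
\begin{equation*}
\|\mathbf{S}\|_2^2=\sum_{n\neq a}\|\mathbf{z}_i^{(n)}\|_2^2+\sum_{m\neq n,\ m,n\neq a}\mathbf{z}_i^{(m)\top}\mathbf{z}_i^{(n)} .
\end{equation*}
The first sum equals $M-1$. The second has $(M-1)(M-2)$ ordered terms, and its expectation is $(M-1)(M-2)\bar\mu$ by the definition of $\bar\mu$ as the average pairwise alignment; averaging over the pairs not involving $a$ is exactly how we interpret $\bar\mu$. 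Hence $\mathbb{E}\|\mathbf{S}\|_2^2=(M-1)\bigl(1+(M-2)\bar\mu\bigr)$.

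Dividing by $(M-1)$ gives
\begin{equation*}
\mathbb{E}\|\mathbf{S}\|_2/(M-1)\le\sqrt{\bigl(1+(M-2)\bar\mu\bigr)/(M-1)},
\end{equation*}
which yields Eq.~(\ref{eq:conflict_scaling_law}).

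For the monotonicity claim, write
\begin{equation*}
\frac{1+(M-2)\bar\mu}{M-1}=\bar\mu+\frac{1-\bar\mu}{M-1}.
\end{equation*}
For $\bar\mu<1$ this is strictly decreasing in $M$, so the lower bound strictly increases with $M$. Its limit is $\bar\mu$, which gives $\liminf_{M\to\infty}\mathbb{E}[\chi_a]\ge 1-\sqrt{\bar\mu}>0$.

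The main obstacle is interpretive rather than technical. We must fix that $\bar\mu$ is the average over the pairs among the non-anchor modalities, or assume it is homogeneous across pairs, and we must handle general weights. For the weights we would either assume uniform $w_{an}$, as the formula suggests, or note that the bound holds with $\bar\mu$ redefined as the weight-averaged pairwise similarity. One would also remark that $\bar\mu$ is held fixed as $M$ grows.
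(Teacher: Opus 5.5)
Your proposal is correct and follows the paper's proof essentially step for step. The shared steps are: uniform weights (the paper sets $w_{an}/\tau=1$, which matches your $w_{an}=w$ because the ratio cancels), expanding $\|\mathbf{V}_a\|_2^2$ using unit norms and a homogeneous pairwise mean $\bar\mu$, applying Jensen's inequality, and taking $M\to\infty$. Your rewriting $\bigl(1+(M-2)\bar\mu\bigr)/(M-1)=\bar\mu+(1-\bar\mu)/(M-1)$ usefully makes explicit that the bound (not $\mathbb{E}[\chi_a]$ itself) increases with $M$, which the paper only asserts.
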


See its proof in Appendix~\ref{app:proof-intra-alignment-coro}. Proposition~\ref{prop:intra-alignmen-conflict} shows that the internal conflict of the alignment force gets severe with more modalities, resulting in ineffective alignment. 

 By combining Proposition~\ref{prop:alignment-uniformity-conflict} and~\ref{prop:intra-alignmen-conflict}, one can conclude that the standard multi-modal InfoNCE objective is fraught with a two-level conflict system: an intra-alignment conflict, and a classic alignment-uniformity conflict. Such conflicts result in a distinct distributional modality gap. This motivates the exploration of alternative frameworks that \textbf{decouple} these objectives, by optimizing uniformity separately and employing a more direct, conflict-free alignment mechanism.

\section{Methodology}


In Section~\ref {theoreticalanalysis}, our  analysis has identified two fundamental  conflicts that impede multi-modal contrastive learning: the intra-alignment conflict ($\chi$), and the alignment-uniformity conflict ($\zeta$). To circumvent these issues, we propose a generic principle to  decouple the learning objectives. Then, we show that our principle essentially minimizes the global distribution gap with a theoretical guarantee.

\subsection{General Principle for Multimodal Learning}
\label{sec:method-principle}

\begin{remarkbox}[A general principle for multimodal learning]
As the \textit{alignment-uniformity} and intra-alignment conflicts are the root for the modality/distribution gap, avoiding these conflicts from the uniformity and alignment perspective is a good general principle: 
\begin{enumerate}[label=\circledgreen{\arabic*}]
\item \textbf{Intra-modality uniformity only:}
encourage uniform spread \emph{within each modality} $\{\mathbf{z}_i^{(m)}\}$ on $\mathbb{S}^{d-1}$, and avoid any
\emph{cross-modality} uniformity/repulsion.
\item \textbf{Conflict-free alignment:} Explicitly or implicitly maximize the consensus magnitude to avoid the non-collinearity problem between positive pairs. 
\end{enumerate}
\circledgreen{1} avoids the cross-modality uniformity conflict but still pushes the embeddings uniformly spreading in a unit hypersphere. \circledgreen{2} avoids the non-collinear positive pulls in the consensus vector.
\end{remarkbox}


Following this generic principle, we present our design for the uniformity and alignment terms in Euclidean space. See a summarization in  Table~\ref{table:principle} of Appendix~\ref{app:design-space}.

\noindent{\textbf{Uniformity.}}
Let \(\mathbf{Z}^{(m)}=\{\mathbf{z}_i^{(m)} \in \mathbb{R}^d \}_{i=1}^B\) denote a batch of unit-normalized embeddings from modality \(m\). To promote uniformity of multimodal representations and mitigate inter-modality conflict, we adopt an \emph{intra-modality} uniformity term to prevent collapse:
\begin{equation}
\label{eq:intra-uniform}
\begin{aligned}
&U\!\big(\mathbf{Z}^{(m)}\big)
= \frac{1}{B}\sum_{i=1}^B
\log\!\left(\frac{1}{B-1}\sum\nolimits_{j\neq i}
\kappa\!\big(\mathbf{z}_i^{(m)}, \mathbf{z}_j^{(m)}\big)\right),
\\
&\qquad\qquad \text{with}\quad  \kappa(\mathbf{z}_i,\mathbf{z}_j)=\exp\!\left(-{\|\mathbf{z}_i-\mathbf{z}_j\|_2^2}/{2\tau^2}\right),
\end{aligned}
\end{equation}
where \(\tau>0\) is the temperature and \(\kappa\) is a Gaussian kernel.
The sample-wise gradient satisfies
$
\nabla_{\mathbf{z}_i^{(m)}} U
= -\frac{1}{\tau^2}\sum_{j\ne i} p_{ij}\,\big(\mathbf{z}_i^{(m)}-\mathbf{z}_j^{(m)}\big),
\quad
p_{ij}=\frac{\exp\!\big(-\|\mathbf{z}_i^{(m)}-\mathbf{z}_j^{(m)}\|_2^2/(2\tau^2)\big)}{\sum_{\ell\ne i}\exp\!\big(-\|\mathbf{z}_i^{(m)}-\mathbf{z}_\ell^{(m)}\|_2^2/(2\tau^2)\big)},
$
so the softmax weights \(p_{ij}\) decay exponentially with distance, effectively suppressing far-away contributions.
Consequently, the uniformity term concentrates gradients on nearby hard negatives (semantically similar samples) while leaving distant points largely unaffected, improving local uniformity and preventing collapse. Note that, as the $U(\mathbf{Z}^{(m)})$ is defined for each modality separately, which is different from the $\Phi_a$ term in Eq.~\eqref{eq:infonce}, hence, the conflict $\zeta_a$ is avoided.

\noindent{\textbf{{Conflict-free alignment.}}
To address the intra-alignment conflict (\(\chi\)) inherent in standard multimodal (\(M\!\ge\!3\)) contrastive objectives, we propose an anchor-based alignment. This design enforces a single positive-pull direction per sample, thereby avoiding non-collinearity among positive pairs.

Specifically, we choose one modality \(\mathbf{Z}^{(a)}\) as the anchor, which serves as a reference direction. As in other modalities, we apply intra-modality uniformity to the anchor embeddings, while aligning all remaining modalities to this anchor. As a result, each sample is aligned along a single direction, preventing competing (non-collinear) positive pulls.

Hence, the most straightforward alignment loss on the Euclidean space can be defined by:
\begin{equation}
\label{eq:alignment}
L_{\mathrm{align}} = \frac{1}{B(M-1)} \sum\nolimits_{i=1}^B \sum\nolimits_{n\neq a}
\bigl\lVert \mathbf{z}_i^{(a)} - \mathbf{z}_i^{(n)} \bigr\rVert_2^2 .
\end{equation}


\noindent{\textbf{Overall objective.}}
The final objective with hyperparameter $\lambda_{\mathrm{align}} > 0$ is defined by:
\begin{equation}
\label{eq:overall-loss}
\mathcal{L}
=  \sum\nolimits_{m=1}^{M} U\!\big(Z^{(m)}\big)
+ \lambda_{\mathrm{align}}\, L_{\mathrm{align}}.
\end{equation}

\noindent{\textbf{Hypersphere space.}} 
Our principle is not restricted to Euclidean space and can be extended to manifolds.
Since both InfoNCE and our uniformity term encourage representations to spread on the unit hypersphere, a natural alternative is to measure similarity via the geodesic distance on $\mathbb{S}^{d-1}$:
\begin{equation}
\begin{aligned}
&d_{\mathbb{S}}(\mathbf{z}^{i},\mathbf{z}^{j}) \;=\; \arccos\!\big(\langle \mathbf{z}^{i},\mathbf{z}^{j} \rangle\big), \qquad \|\mathbf{z}^{i}\|_2=\|\mathbf{z}^{j}\|_2=1, \\
&k_{\mathbb{S}}(\mathbf{z}^{i},\mathbf{z}^{j};\tau) \;=\; \exp\!\Bigl(-\,{d_{\mathbb{S}}(\mathbf{z}^{i},\mathbf{z}^{j})^2}/{2\,\tau^2}\Bigr).
\end{aligned}
\end{equation}
This yields a drop-in hyperspherical variant within the same framework, illustrating the flexibility of our design.
We validate this choice in Table~\ref{table:generation} and summarize Euclidean vs.\ manifold instantiations in Table~\ref{table:principle} (Appendix~\ref{app:design-space}).



\subsection{Theoretical Analysis from Divergence Perspective}
In the previous section, we introduced our objective based on the proposed principle. A natural question is whether this objective is theoretically guaranteed to reduce the distribution gap across modalities. Here, we show that optimizing intra-modality uniformity and cross-modality alignment minimizes a global distribution divergence, thereby mitigating the cross-modal (distribution) gap.

Classical divergences~\citep{jenssen2006cauchy,shlens2014notes} are typically defined between two distributions, but our setting involves $M$ modalities. We therefore introduce a new \emph{global Hölder divergence} to jointly measure the discrepancy among all modality distributions. Let $\{p_{m}(\mathbf{z})\}_{m=1}^M$ denote the densities of the $M$ modalities. By Hölder’s inequality, they satisfy
\begin{equation}
	\left| \int \prod\nolimits_{m=1}^M p_m (\mathbf{z}) \, d\mathbf{z} \right|^M \leq \prod\nolimits_{m=1}^M \int |p_{m}(\mathbf{z})|^M d\mathbf{z},
\label{eq:div-def}
\end{equation}
and takes equality if and only if \(p_1=\cdots=p_M\). This inequality motivates the definition of the global Hölder divergence as the log of the ratio between the two sides of Eq.~\eqref{eq:div-def}:
\begin{equation}
\begin{aligned}
&D_{\text{Hölder}}
= -\log \frac{\int \prod_{m=1}^M p_{m}(\mathbf{z})\, d\mathbf{z}}
{\left(\prod_{m=1}^M \int |p_{m}(\mathbf{z})|^M d\mathbf{z} \right)^{\frac{1}{M}}} \\ 
&= \underbrace{\frac{1}{M}\sum_{m=1}^{M} \log \int |p_{m}(\mathbf{z})|^M d\mathbf{z}}_{\text{Uniformity Term}} \underbrace{ -\log \int \prod_{m=1}^M p_{m}(\mathbf{z})\, d\mathbf{z}}_{\text{Alignment Term}}.
\end{aligned}
\label{eq:divergence}
\end{equation}



We empirically estimate this global divergence in a non-parametric way via a kernel density estimator (KDE) with a Gaussian kernel. Under the KDE plug-in estimator derived in Appendix~\ref{app:divergence}, the term $\frac{1}{M}\sum_{m=1}^{M} \log \int |p_{m}(\mathbf{z})|^M d\mathbf{z}$ can be approximated by averaging Gaussian similarities between samples within the same modality, which motivates our intra-modality uniformity objective $U(Z^{(m)})$ in Eq.~\eqref{eq:intra-uniform} (up to constants). Likewise, the second term $-\log \int \prod_{m=1}^M p_{m}(\mathbf{z})\, d\mathbf{z}$ measures how much the modality distributions overlap in the shared embedding space. It can be approximated by averaging Gaussian similarities between embeddings from different modalities. Maximizing this overlap admits an efficient instance-matching surrogate in the low-temperature limit $(\tau \to 0)$, motivating $L_{\mathrm{align}}$ in Eq.~\eqref{eq:alignment}. Therefore, optimizing intra-modality uniformity together with cross-modality alignment provides a tractable way to reduce the global Hölder divergence in Eq.~\eqref{eq:divergence}.

\subsection{Tuple-Level Extensions}
Our design principle is \emph{instantiation-agnostic}: it targets uniformity across samples and alignment across modalities, while remaining compatible with a broad class of loss constructions.
Beyond anchor-wise (pairwise) formulations, a complementary direction is to treat each multimodal tuple as a \emph{single structured sample} and regularize its \emph{within-tuple geometry}.
This tuple-level view strengthens both uniformity and alignment from a different angle.

We propose a tuple-level complement that (i) enforces \emph{uniform dispersion} of tuples in the representation space, and (ii) encourages \emph{cross-modal collinearity} within each tuple.

\noindent\textbf{Tuple-level uniformity \(U(\mathbf{C})\). }
Given a multimodal tuple \(\{\mathbf{z}_i^{(1)},\ldots,\mathbf{z}_i^{(M)}\}\), we represent it by a weighted and normalized centroid \(\mathbf{c}_i\), and denote \(\mathbf{C}=\{\mathbf{c}_i\}_{i=1}^{B}\):
\begin{equation}
\label{eq:centroid-def}
\begin{aligned}
	&\mathbf{c}_i
	= {\sum\nolimits_{m=1}^{M} w_m\, \mathbf{z}_i^{(m)}}/{\big\|\sum\nolimits_{m=1}^{M} w_m\, \mathbf{z}_i^{(m)}\big\|_2},
	\\
	&\qquad \text{with} \quad w_m \ge 0,\quad \sum\nolimits_{m=1}^{M} w_m = 1.
\end{aligned}
\end{equation}
We then apply the same uniformity objective to \(\mathbf{C}\), i.e., \(U(\mathbf{C})\) (default \(w_m=1/M\)), which encourages tuples (rather than individual modalities) to spread out and improves global separability.

\noindent\textbf{Tuple-level alignment $L_{\mathrm{vol}}$.}
To capture agreement among all modalities within each tuple, we penalize the \emph{Gram-determinant volume} spanned by \(\{\mathbf{z}_i^{(m)}\}_{m=1}^{M}\), which equals zero when the modality embeddings are collinear.
Let \(\mathbf{G}_i \in \mathbb{R}^{M\times M}\) be the Gram matrix~\citep{cicchetti2024gramian} with \([\mathbf{G}_i]_{mn}=\langle \mathbf{z}_i^{(m)}, \mathbf{z}_i^{(n)}\rangle\).
The induced volume is proportional to \(\sqrt{\det(\mathbf{G}_i)}\), and we define
\begin{equation}
\label{eq:vol-loss}
L_{\mathrm{vol}} \;=\; \frac{1}{B}\sum_{i=1}^{B}\sqrt{\det(\mathbf{G}_i)}.
\end{equation}
Minimizing \(L_{\mathrm{vol}}\) explicitly promotes cross-modal collinearity, complementing the anchor-based alignment.

\section{Related Work}

\begin{table*}[t!]
\centering
\caption{\textbf{Zero‐shot multimodal text‐to‐video (T2V) and video‐to‐text (V2T) retrieval results (Recall@1).} Our method, UniAlign, consistently outperforms baselines in most tasks. * denotes the results without tuple-level losses ($U(\mathbf{C})$ and $L_{\mathrm{vol}}$).}
\label{tab:zero_shot}
\resizebox{0.8\textwidth}{!}{%
  \begin{tabular}{@{} l c  cc  cc  cc  cc @{}}
    \toprule
    Method & Modality
      & \multicolumn{2}{c}{MSR‐VTT}
      & \multicolumn{2}{c}{DiDeMo}
      & \multicolumn{2}{c}{ActivityNet}
      & \multicolumn{2}{c}{Average} \\
    \cmidrule(lr){3-4}\cmidrule(lr){5-6}\cmidrule(lr){7-8}\cmidrule(lr){9-10}
           &         & T2V & V2T & T2V & V2T & T2V & V2T & T2V & V2T \\
    \midrule
    UMT~\citep{liu2022umt}                  & T–V  & 33.3 & --  & 34.0 & --   & 31.9 & --   & 33.1 & -- \\
    OmniVL~\citep{wang2022omnivl}           & T–V  & 34.6 & --  & 33.3 & --   & --   & --   & 34.0 & -- \\
    UMT‐L~\citep{li2023unmasked}            & T–V  & 40.7 & 37.1& 48.6 & 49.9 & 41.9 & 39.4 & 43.7 & 42.1 \\
    TVTSv2~\citep{zeng2023tvtsv2}           & T–V  & 38.2 & --  & 34.6 & --   & --   & --   & 36.4 & -- \\
    ViCLIP~\citep{wang2023internvid}        & T–V  & 42.4 & 41.3& 18.4 & 27.9 & 15.1 & 24.0 & 25.3 & 31.1 \\
    VideoCoCa~\citep{yan2022videococa}      & T–V  & 34.3 & 64.7& --   & --   & 34.5 & 33.0 & 34.4 & 48.9 \\
    Norton~\citep{lin2024multi}             & T–V  & 10.7 & --  & --   & --   & --   & --   & 10.7 & -- \\
    ImageBind~\citep{girdhar2023imagebind}  & T–V  & 36.8 & --  & --   & --   & --   & --   & 36.8 & -- \\
    InternVideo‐L~\citep{wang2022internvideo}& T–V & 40.7 & 39.6& 31.5 & 33.5 & 30.7 & 31.4 & 34.3 & 34.8 \\
    HiTeA~\citep{ye2023hitea}               & T–V  & 34.4 & --  & 43.2 & --   & --   & --   & 38.8 & -- \\
    mPLUG‐2~\citep{xu2023mplug}             & T–V  & 47.1 & --  & 45.7 & --   & --   & --   & 46.4 & -- \\
    VideoPrism‐b~\citep{zhao2024videoprism} & T–V  & 51.4 & 50.2& --   & --   & 49.6 & 47.9 & 50.5 & 49.1 \\
    LanguageBind~\citep{zhu2023languagebind}& T–V  & 44.8 & 40.9& 39.9 & 39.8 & 41.0 & 39.1 & 41.9 & 39.9 \\
    VAST~\citep{chen2023vast}        & T–VA & 49.3 & 43.7& 49.5 & 48.2 & 51.4 & 46.8 & 50.1 & 46.2 \\
    GRAM~\citep{cicchetti2024gramian}                             & T–VA & 54.2 & 50.5& 54.2 & \bf{52.2} & 59.0 & 50.4 & 55.8 & 51.0 \\
    \midrule
    \rowcolor{lightblue}
    UniAlign* (Ours)                                    & T–VA & {57.7} & {53.2} & {55.2} & {51.9} & {59.2} & \bf{52.5} & {57.4} & {52.5} \\
    \rowcolor{lightblue}
    UniAlign (Ours)                                    & T–VA & \bf{58.7} & \bf{54.6} & \bf{58.2} & 51.6 & \bf{59.4} & {51.7} & \bf{58.8} & \bf{52.6} \\
    \bottomrule
  \end{tabular}%
}
\vspace{-4mm}
\end{table*}

CLIP~\citep{radford2021learning} pioneered aligning two modalities (vision and language) using the InfoNCE objective~\citep{oord2018representation}. It has enabled substantial progress in image–text retrieval~\citep{jang2024mate,koukounas2024jina,huang2024llm2clip} and text-to-image (T2I) generation~\citep{ramesh2022hierarchical,rombach2022high}. CLIP-style contrastive objectives have since been applied to additional modality pairs, including audio–text~\citep{elizalde2023clap,wu2023large} and point cloud–text~\citep{zhang2022pointclip}. Beyond pairs, recent work such as CMRC~\citep{wang2023connecting}, CLIP4VLA~\citep{ruan2023accommodating}, ImageBind~\citep{girdhar2023imagebind}, and LanguageBind~\citep{zhu2023languagebind} extends CLIP by introducing more modalities (e.g., video, audio, depth, IMU) into a unified space using pairwise InfoNCE objectives. In parallel, VAST~\citep{chen2023vast}, mPLUG-2~\citep{xu2023mplug}, and InternVideo2~\citep{wang2024internvideo2} advance the state of the art through large-scale training and architectural refinements. Complementing these trends, GRAM~\citep{cicchetti2024gramian} introduces the cross-modality Gram matrix to replace pairwise cosine similarity in InfoNCE with a volume score given by the modality Gram matrix to better handle multimodal alignment.

Despite the success of these methods, embeddings from different modalities still exhibit distinct \emph{distribution gaps} (Fig.~\ref{fig:tsne-vis}), largely attributable to the InfoNCE objective. Prior studies~\citep{zhou2023clip,liang2022mind,shi2023towards} have reported this phenomenon in vision–language learning: \citet{liang2022mind} observe that the InfoNCE objective can encourage modality gaps, while \citet{yin2025distributional} provide a theoretical account showing that uniformity and alignment~\citep{wang2020understanding} conflict, inducing persistent distributional discrepancies. However, these analyses are restricted to the bimodal case; a principled understanding of the conflict mechanisms in the \emph{multimodal} regime remains lacking, partly due to the geometric complexity of shared representation spaces. In this work, we systematically analyze these conflicts for general multimodal learning and, based on this analysis, propose a generic principle for multimodal representation learning.

\section{Experiments}
\label{sec:experiments}

We evaluate UniAlign from two aspects: (i) embedding separability via cross-modal retrieval (Sec.~\ref{sec:exp-retrieval}), and (ii) the distributional modality gap via cross-modal generation with fixed image decoders (Sec.~\ref{sec:exp-generation}). Additional implementation details and results are provided in Appendix~\ref{app:experiments}.

\subsection{Cross-modal Retrieval}
\label{sec:exp-retrieval}


\noindent\textbf{Experimental setting.} 
Following GRAM~\citep{cicchetti2024gramian}, we train on VAST150K~\citep{chen2023vast} and evaluate \emph{zero-shot} video retrieval on three standard benchmarks: MSR-VTT~\citep{xu2016msr}, DiDeMo~\citep{anne2017localizing}, and ActivityNet~\citep{caba2015activitynet}. For fair comparison, we use the same modality encoders as VAST/GRAM: BERT-B for text, BEATs~\citep{chen2022beats} for audio, and EVA-CLIP ViT-G~\citep{sun2023eva} for video. We report zero-shot Recall@1 (R@1) for both text-to-video (T2V) and video-to-text (V2T) {retrieval}. For joint retrieval, where text queries the most similar video-audio tuple (T-VA) and vice versa, we follow GRAM and rank samples by the \emph{volume} score (the determinant of the Gramian matrix).


\begin{figure*}[t!] 
    \centering
    \begin{minipage}{0.32\textwidth}
        \centering
\includegraphics[width=\linewidth]{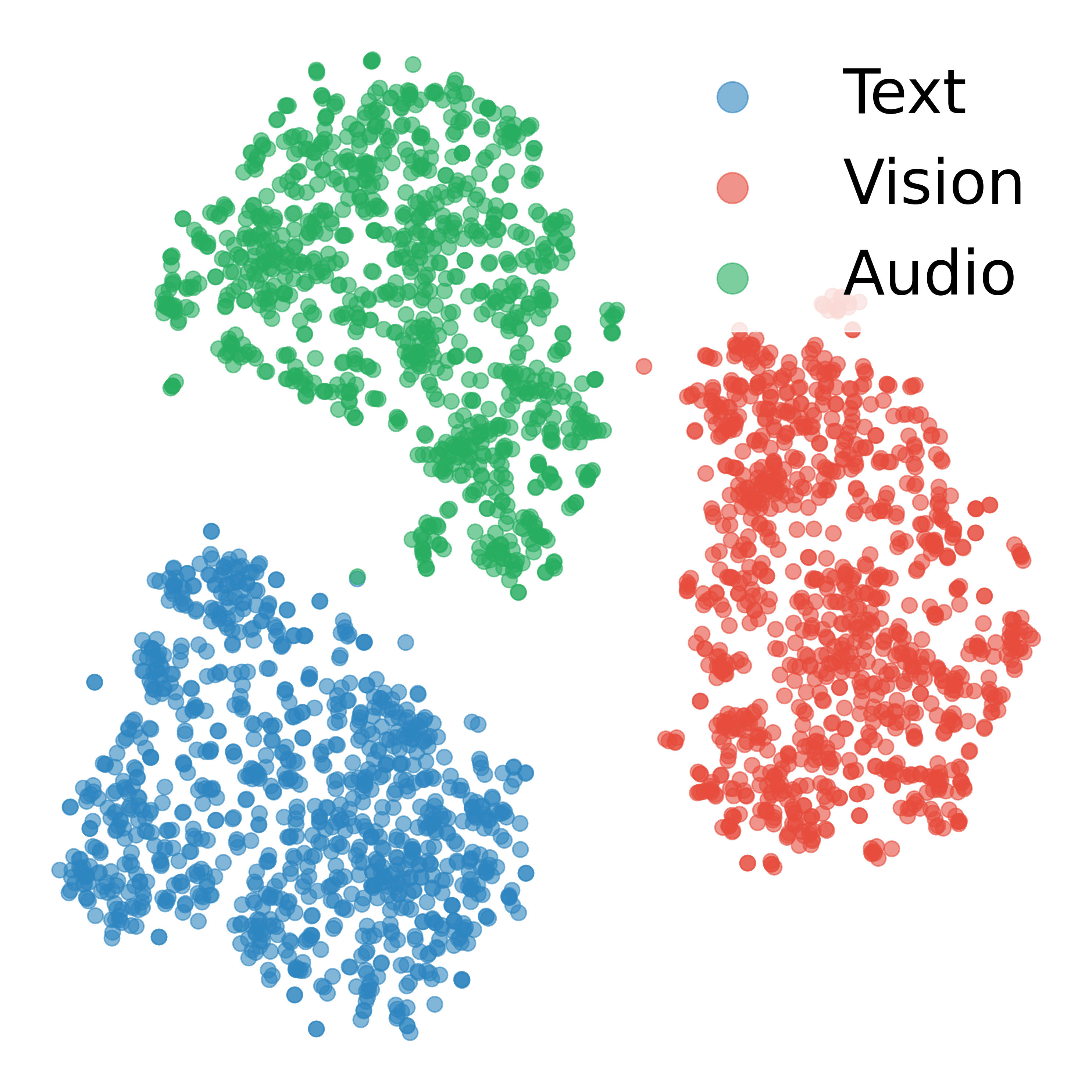}
        \vskip -3.5mm
        \subcaption{ImageBind (paired InfoNCE).}
        \label{fig:tsne-infonce}
    \end{minipage}%
    \hfill
    \begin{minipage}{0.32\textwidth}
        \centering
\includegraphics[width=\linewidth]{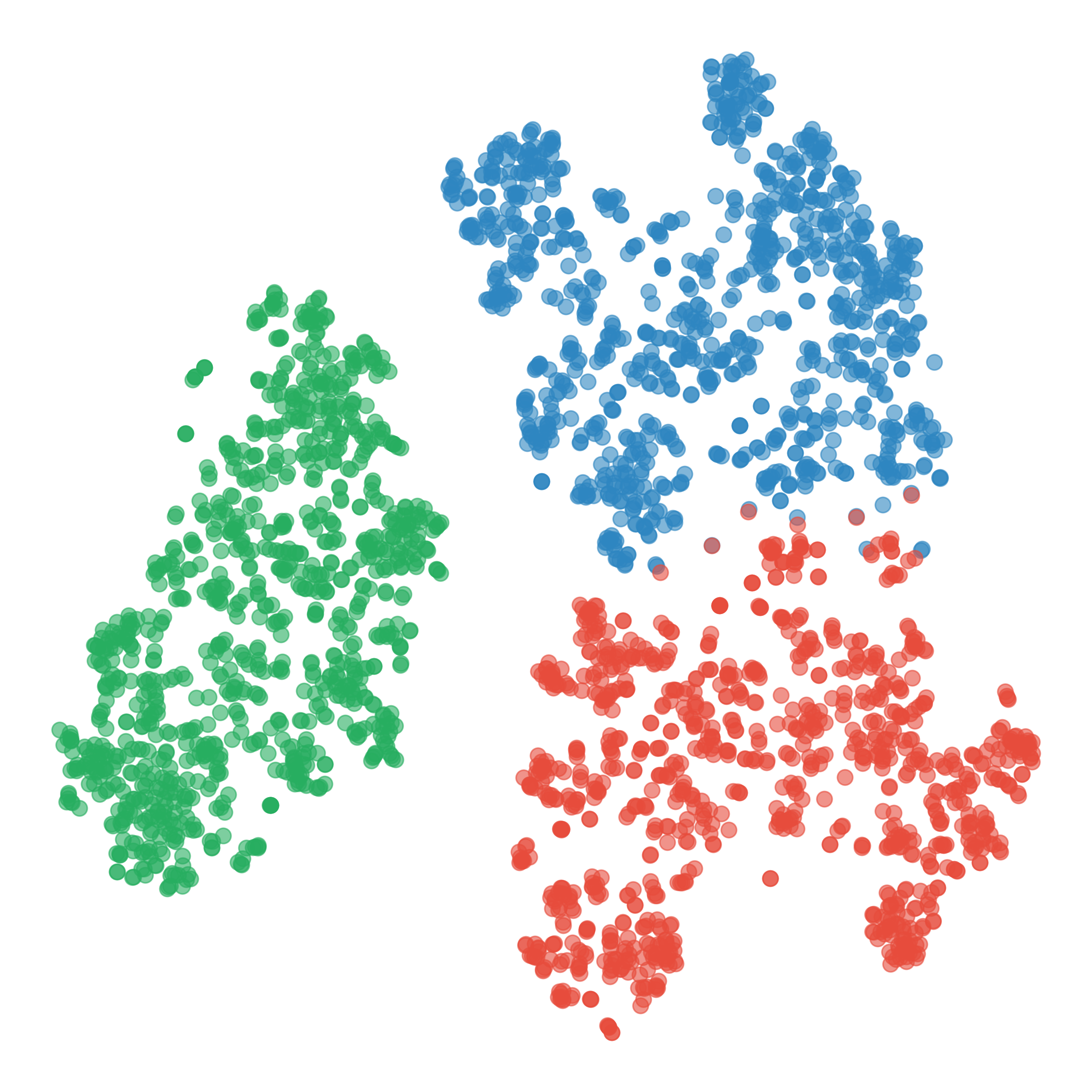}
        \vskip -3.5mm
        \subcaption{GRAM (volume InfoNCE).}
        \label{fig:tsne-gram}
    \end{minipage}%
    \hfill
    \begin{minipage}{0.32\textwidth}
        \centering        \includegraphics[width=\linewidth]{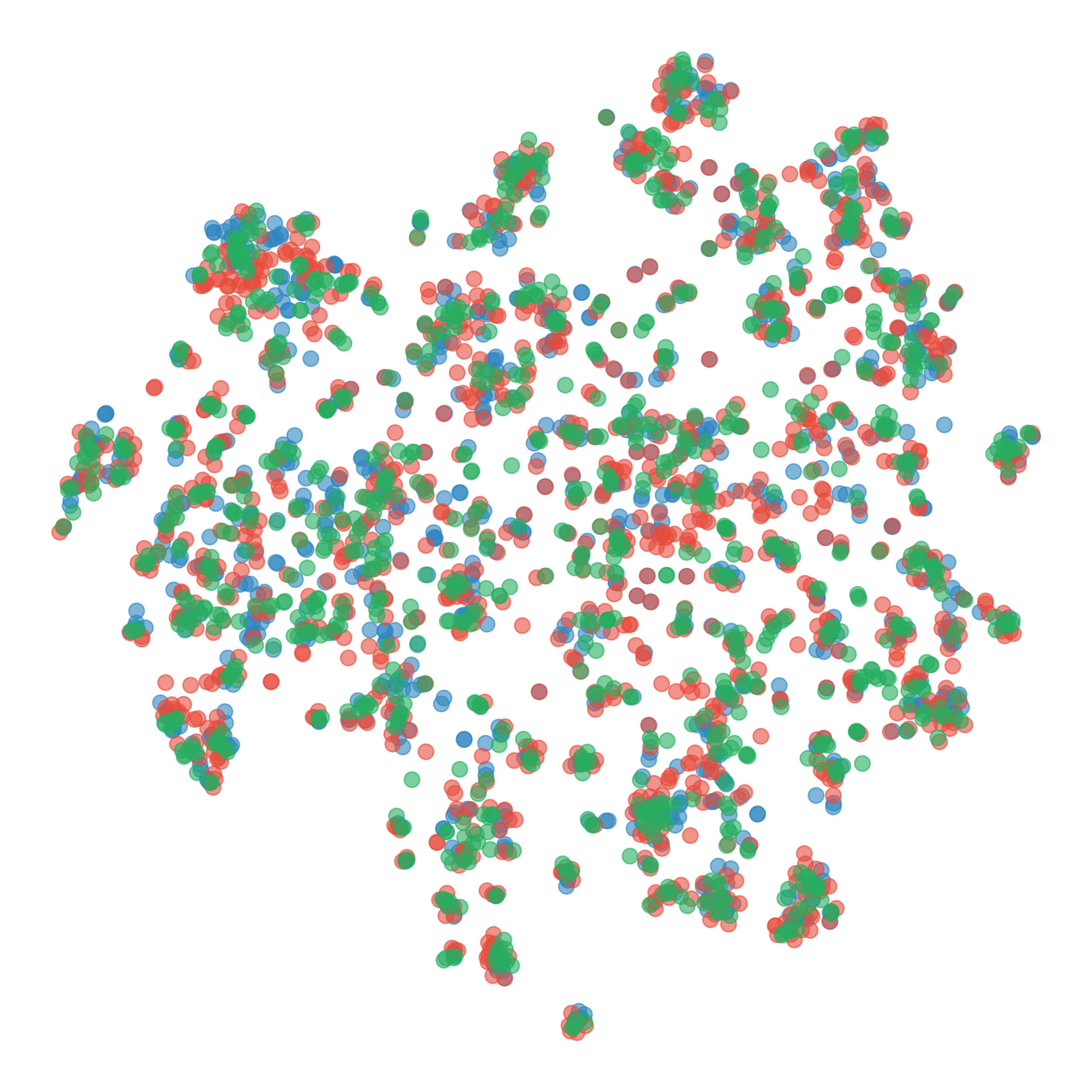}
        \vskip -3.5mm
        \subcaption{Ours}
        \label{fig:tsne-ours}
    \end{minipage}
\vskip -0.5mm
\caption{\textbf{ T-SNE visualizations of vision, text, and audio features.} InfoNCE-type of objective results in clear distribution gaps (\ref{fig:tsne-infonce} and \ref{fig:tsne-gram}). Our method mitigates the distribution gap (\ref{fig:tsne-ours}). }
\vskip -4.5mm
\label{fig:tsne-vis}
\end{figure*}

\noindent\textbf{Experimental results.} 
Table~\ref{tab:zero_shot} shows that UniAlign consistently outperforms strong baselines on zero-shot T2V/V2T retrieval. Notably, UniAlign* already yields solid gains even without tuple-level losses ($U(\mathbf{C})$ and $L_{\mathrm{vol}}$).
Starting from the same VAST pretrained weights, UniAlign improves VAST by \(8.7\) R@1 on T2V and \(6.4\) R@1 on V2T, demonstrating the effectiveness of our principle-instantiated objective.
Moreover, since VAST uses pairwise InfoNCE, it is affected by both conflicts.
GRAM introduces a Gramian volume score inside InfoNCE, which encourages collinearity (the volume is minimized when vectors become collinear) and thus alleviates the {intra-alignment} conflict, but the alignment-uniformity conflict remains.
By explicitly addressing both conflicts, UniAlign further improves over GRAM by \(3.0\) R@1 on T2V and \(1.6\) on V2T.

\begin{table}[t]
\vskip .5mm
\caption{\textbf{Ablation on $U(C)$ and $L_{\mathrm{vol}}$.} Both tuple-level uniformity $U(C)$ and tuple-level alignment $L_{\mathrm{vol}}$ yield consistent gains.}
\centering
\setlength{\tabcolsep}{6pt}
\begin{tabular}{ccccc}
\toprule
$U(C)$ & $L_{\mathrm{vol}}$ & T2V & V2T & Avg. \\
\midrule
\xmark & \xmark &  36.5 & 36.8 & 36.6 \\
\xmark & \cmark &  38.3 & 36.9 & 37.6 \\
\cmark & \xmark & 37.4 & 39.8 & 38.6 \\
\cmark & \cmark & 40.2 & 43.4 & 41.8 \\
\bottomrule
\end{tabular}
\label{table:ablation-retrieval}
\vskip -4.5mm
\end{table}



\noindent{\textbf{Ablation study.}} To understand the effects of the tuple-level uniformity and alignment, $U(C)$ and $L_{\mathrm{vol}}$, we ablate both components on MSR-VTT (training and testing). To isolate the contribution of our objectives, we use plain cosine-similarity retrieval, excluding common post-hoc refinements such as similarity matrix or image-text matching re-ranking. As shown in Table~\ref{table:ablation-retrieval}, both the tuple-level uniformity $U(C)$ and alignment $L_{\mathrm{vol}}$ improve embedding separability and consistently boost retrieval performance. 
More ablations are provided in Appendix~\ref{app:ablation}.


\subsection{Cross-modal Generation}
\label{sec:exp-generation}

\begin{table*}[t]
\caption{\textbf{Cross-modal generation with different decoders.} We report FID (\(\downarrow\)). Kandinsky and Stable UnCLIP, evaluated in self-reconstruction by feeding image embeddings to the decoder (marked \(*\)), serve as upper-bound references. Our method consistently outperforms both baselines.}
\centering
\small
\setlength{\tabcolsep}{4pt}
\begin{tabular}{llllll}
\toprule
Decoder & Method~~~~~~~ & T2I $\downarrow$~~~~~~~ & A2I $\downarrow$~~~~~~~ &  $(\mathrm{T}{+}\mathrm{A}){\to}\mathrm{I}$ $\downarrow$ & Avg. $\downarrow$~~~~~~~ \\
\midrule
\multirow{4}{*}{Kandinsky}
 & Kandinsky & - & - & - & $32.99^{\ast}$ \\
 & GRAM & 62.11 & 106.97 & 92.63 & 87.23 \\
& \cellcolor{lightblue}Ours (Geodesic)  & \cellcolor{lightblue}\textbf{45.35} & \cellcolor{lightblue}\textbf{50.75} & \cellcolor{lightblue}\textbf{48.19} & \cellcolor{lightblue}\textbf{48.09} \\
 & \cellcolor{lightblue}Ours (Euclidean) & \cellcolor{lightblue}\textbf{42.72} & \cellcolor{lightblue}\textbf{50.51} & \cellcolor{lightblue}\textbf{46.56} & \cellcolor{lightblue}\textbf{46.60} \\
\midrule
\multirow{5}{*}{Stable UnCLIP}
 & Stable UnCLIP & - & - & - & $34.61^{\ast}$ \\
 & ImageBind & 50.17 & 53.59 & 46.81 & 50.19 \\
 & GRAM & 45.53 & 55.40 & 47.15 & 49.36 \\
 & \cellcolor{lightblue}Ours (Geodesic) & \cellcolor{lightblue}\textbf{39.88} & \cellcolor{lightblue}\textbf{40.16} & \cellcolor{lightblue}\textbf{40.80} & \cellcolor{lightblue}\textbf{40.23} \\
 & \cellcolor{lightblue}Ours (Euclidean) & \cellcolor{lightblue}\textbf{39.63} & \cellcolor{lightblue}\textbf{39.95} & \cellcolor{lightblue}\textbf{41.03} & \cellcolor{lightblue}\textbf{40.20} \\
\bottomrule
\end{tabular}
\label{table:generation}
\end{table*}


\begin{figure*}[t]
\vskip -1mm
  \centering  \includegraphics[width=1\textwidth]{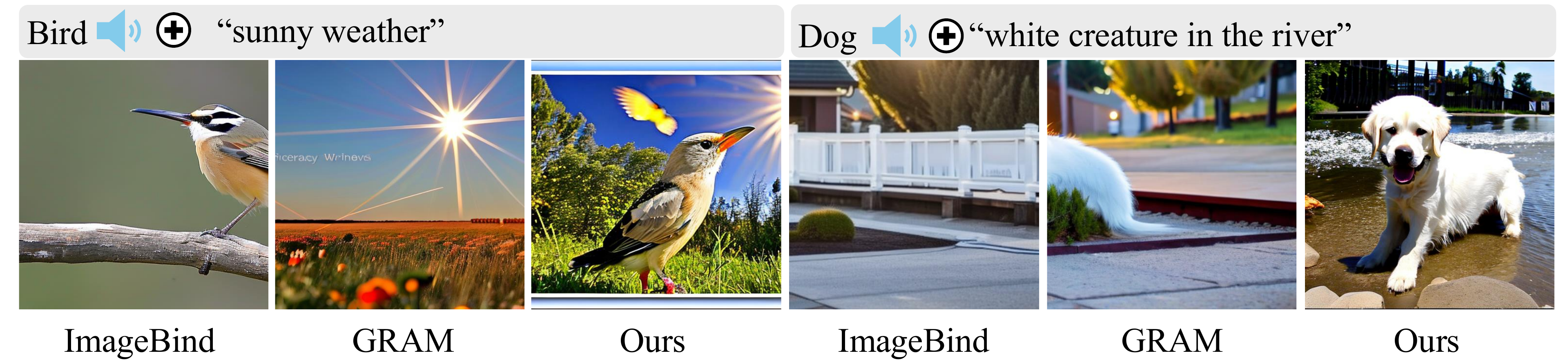} 
\caption{\textbf{Modality-interpolation generation results (T+A) $\to$ I.} When interpolating between text and audio representations, our method has a better ability to fuse the semantic information across modalities, leading to better generation. } 
\label{fig:fig-inter}
\vskip -4.5mm
\end{figure*}


To evaluate the distributional modality gap, we use a simple proxy: if multiple modalities are {well} aligned to a shared embedding distribution, embeddings from non-image modalities (e.g., audio or text) should be seamlessly decodable by an image generator trained on image embeddings. Under this view, cross-modal generation quality provides a direct indicator of cross-modal alignment. Accordingly, we adopt UnCLIP-style generators~\citep{ramesh2022hierarchical}, which use a separate image generator trained on image embeddings.

\noindent{\textbf{Dataset.}} We use the VGGSound dataset~\citep{chen2020vggsound} to evaluate the generation performance. VGGSound is an audio-visual correspondent dataset, allowing us to build a semantically aligned vision-audio-text triplet. It has around 200K video clips, annotated with 309 sound classes.
The dataset does not provide the video caption. Hence, we use the captioner provided by VAST~\citep{chen2023vast} to generate video captions. 
{We select 1024 videos for testing and utilize the remaining ones for training.}

\noindent{\textbf{Experimental setting.}} 
We map all modalities into a shared, image-anchored embedding space and evaluate two encoder-decoder configurations compatible with UnCLIP-style generators. (i) \emph{ViT-H}: CLIP ViT-H/14 image-text encoders paired with the ImageBind audio encoder, compatible with the Stable UnCLIP decoder~\citep{ramesh2022hierarchical}. (ii) \emph{ViT-bigG}: CLIP ViT-bigG-14 vision-text encoders combined with the BEATs audio encoder~\citep{chen2022beats}, compatible with the Kandinsky decoder~\citep{razzhigaev2023kandinsky}. For comparison, we re-train GRAM and use the released ImageBind weights pretrained on large-scale data. We evaluate text-to-image (T2I), audio-to-image (A2I), and modality interpolation {generation} using Fr'echet Inception Distance (FID)~\citep{heusel2017gans}.


\noindent{\textbf{T-SNE visualization.}} 
We visualize the joint embedding space using 2D t-SNE (Fig.~\ref{fig:tsne-vis}) to illustrate modality gaps under different training objectives. 
We extract text, vision, and audio embeddings from VGGSound and compute t-SNE. As shown in Fig.~\ref{fig:tsne-vis}, training with an InfoNCE-type objective {(both paired and volume InfoNCE)} yields clearly separated, modality-specific clusters (i.e., modality distribution gap), whereas our method produces substantially tighter cross-modal alignment, 
{significantly reducing the distribution gap.}

\noindent{\textbf{Results.}} We compare against GRAM and ImageBind using both \emph{Kandinsky} and \emph{Stable UnCLIP} decoders. When conditioned on image embeddings, these decoders achieve FID \(32.99\) and \(34.61\), respectively, serving as dataset-specific upper bounds (decoder self-reconstruction). UniAlign yields substantial improvements in cross-modal generation for both text-to-image (T2I) and audio-to-image (A2I) over InfoNCE-trained baselines. As shown in Table~\ref{table:generation}, UniAlign significantly outperforms GRAM with \emph{Kandinsky}, and improves over ImageBind and GRAM by about \(10\) FID with \emph{Stable UnCLIP}. These gains are consistent across architectures and decoders, indicating the robustness of our objective and suggesting a reduced modality gap via tighter distributional alignment. We also evaluate a geodesic-kernel variant, which performs on par with the Euclidean version, further supporting the generality of our principle. Additional qualitative results are provided in Appendix~\ref{app:experiments}.



\noindent\textbf{Modality interpolation.}
If {multiple} modalities are aligned to a {shared} distribution, a straightforward application is \emph{embedding interpolation}, which blends information from different modalities directly in the shared space for image synthesis (as opposed to conditioning a generator via cross-attention from a single modality).
Prior work has primarily demonstrated this for vision–language with DALL\!-\!E~2~\citep{ramesh2022hierarchical}.
Here, we go beyond vision-language interpolation for generation. We interpolate modality embeddings (e.g., \((T{+}A)/2\) and generate images with Kandinsky and Stable UnCLIP decoders.
Our method outperforms baselines both quantitatively (Table.~\ref{table:generation} with lower FID) and qualitatively (Fig.~\ref{fig:fig-inter}), indicating an improved ability to fuse complementary semantics across modalities.
We attribute these gains to the reduced cross-modal distribution gap and the resulted smoothness of the shared embedding manifold.

\section{Conclusion}
We introduced a conflict-aware principle for multimodal representation learning that decouples \emph{uniformity} from \emph{alignment}, overcoming key limitations of InfoNCE when modality number \(M\!\ge\!3\). By promoting intra-modality uniformity and anchoring positive alignment, our method directly reduces cross-modal distribution gaps. A divergence-based analysis further shows that these objectives serve as tractable estimators for minimizing a global discrepancy, providing theoretical guarantees. Empirically, the learned embeddings achieve strong performance in video retrieval and cross-modal generation with UnCLIP decoders, while t-SNE visualizations confirm improved modality integration. Overall, our approach offers a conflict-free and theoretically grounded framework for unifying discriminative and generative multimodal tasks without task-specific modules.

\noindent{\textbf{Limitation.}} Our study mainly focuses on fine-tuning rather than large-scale pretraining. Extending the proposed principle to large-scale pretraining would require substantial compute resources and is left for future work.

\section*{Impact Statement}
This paper presents work whose goal is to advance the field of machine learning. There are many potential societal consequences of our work, none of which we feel must be specifically highlighted here.

\bibliography{icml_reference}
\bibliographystyle{icml2026}

\newpage
\appendix
\onecolumn

\section{Proof of Proposition~\ref{cor:alignment-uniformity-conflict}}
\label{app:alignment-uni-conflict}
\begin{customproposition}{1}[Alignment–uniformity Conflict]
\label{cor:alignment-uniformity-conflict}
Let $\bm{\Phi}_a=\sum_{n\neq a}\bm{\Phi}_a^{(n)}$ be the total uniformity force on anchor $a$. Assume each per-modality component admits the decomposition:
\begin{equation}
\bm{\Phi}_a^{(n)} \;=\; c_n\,\hat{\mathbf{V}}_a \;+\; \bm{\varepsilon}_n.
\end{equation}

Here, $\hat{\mathbf{V}}_a \triangleq \mathbf{V}_a/\|\mathbf{V}_a\|$ is the direction of the total alignment force. The scalar $c_n \triangleq \bm{\Phi}_a^{(n)} \cdot \hat{\mathbf{V}}_a$ quantifies the magnitude of systematic conflict from modality $n$ in this direction, and satisfies $c_n \ge c_0 $ for some positive constant $c_0$. The vector $\bm{\varepsilon}_n$ is a random perturbation unique to modality $n$, assumed to be zero-mean, mutually independent, and with bounded covariance.

Under these assumptions, the overall alignment-uniformity conflict $\zeta_a$ converges to its maximum value as the number of modalities $M$ increases:
\begin{equation}
\mathbb{E}[\zeta_a]
=\mathbb{E}\!\left[\cos\!\big(\mathbf{V}_a,\bm{\Phi}_a\big)\right]
\ \rightarrow \ 1 \quad \text{as } M\to\infty.
\end{equation}
\end{customproposition}

\begin{proof}
The proof proceeds in three stages. First, we provide a formal justification for the decomposition of the per-modality uniformity force. Second, we derive a precise expression for the conflict metric $\zeta_a$ based on this decomposition. Finally, we analyze the asymptotic behavior of this expression as the number of modalities $M \to \infty$.

\paragraph{Justification of the Decomposition} The decomposition of $\bm{\Phi}_a^{(n)}$ is a formalization of the geometric principle of orthogonal projection. For any vector $\bm{\Phi}_a^{(n)}$ and a given direction defined by the unit vector $\hat{\mathbf{V}}_a$, we can uniquely decompose $\bm{\Phi}_a^{(n)}$ into a component parallel to $\hat{\mathbf{V}}_a$ and a component orthogonal to it. The component parallel to $\hat{\mathbf{V}}_a$ is its orthogonal projection, which we define as the {systematic component}:
\begin{equation}
\text{Proj}_{\hat{\mathbf{V}}_a}(\bm{\Phi}_a^{(n)}) = (\bm{\Phi}_a^{(n)} \cdot \hat{\mathbf{V}}_a) \hat{\mathbf{V}}_a.
\end{equation}

Intuitively, in each modality, non-paired but semantically similar samples
(``hard negatives'') exert a weak but systematic pull in the same direction as the
true cross-modal target; this shared component is modeled by $c_n\hat{\mathbf{V}}_a$.
Residual variation due to batch composition, data augmentations, and encoder
stochasticity is captured by zero-mean, bounded perturbations $\bm{\varepsilon}_n$
that are approximately independent across modalities. As the number of modalities
increases, the systematic components add coherently while the residuals average
out, leading to the observed accumulation of alignment–uniformity conflict.

The conflict metric $\zeta_a$ is the cosine similarity between $\mathbf{V}_a$ and the total uniformity force $\bm{\Phi}_a$:
\begin{equation}
\zeta_a = \cos(\mathbf{V}_a, \bm{\Phi}_a) = \frac{\mathbf{V}_a \cdot \bm{\Phi}_a}{\|\mathbf{V}_a\| \|\bm{\Phi}_a\|} = \frac{\hat{\mathbf{V}}_a \cdot \bm{\Phi}_a}{\|\bm{\Phi}_a\|}.
\end{equation}
Let $N = M-1$. The total uniformity force is $\bm{\Phi}_a = \sum_{n=1}^{N} \bm{\Phi}_a^{(n)} = (\sum_{n=1}^{N} c_n) \hat{\mathbf{V}}_a + \sum_{n=1}^{N} \bm{\varepsilon}_n$. Let $S_c = \sum_{n=1}^{N} c_n$ and $\mathbf{S}_\varepsilon = \sum_{n=1}^{N} \bm{\varepsilon}_n$. Due to orthogonality, the numerator of $\zeta_a$ is $\hat{\mathbf{V}}_a \cdot \bm{\Phi}_a = S_c$ and the squared norm of the denominator is $\|\bm{\Phi}_a\|^2 = S_c^2 + \|\mathbf{S}_\varepsilon\|^2$. Substituting these back, we obtain a precise expression for $\zeta_a$:
\begin{equation}
\zeta_a = \frac{S_c}{\sqrt{S_c^2 + \|\mathbf{S}_\varepsilon\|^2}} = \frac{1}{\sqrt{1 + \frac{\|\mathbf{S}_\varepsilon\|^2}{S_c^2}}}.
\end{equation}

The proof now hinges on showing that the ratio $\frac{\|\mathbf{S}_\varepsilon\|^2}{S_c^2}$ converges to zero as $N \to \infty$. The denominator $S_c^2 = (\sum c_n)^2 \ge (N c_0)^2$ grows at least quadratically. For the numerator, due to the independence and zero-mean properties of $\{\bm{\varepsilon}_n\}$, its expected value grows at most linearly:
\begin{equation}
\mathbb{E}[\|\mathbf{S}_\varepsilon\|^2] = \sum_{n=1}^N \mathbb{E}[\|\bm{\varepsilon}_n\|^2] \le N C_\varepsilon,
\end{equation}
for some constant $C_\varepsilon < \infty$ implied by the bounded covariance. The ratio of the expected numerator to the lower-bounded denominator is of the order $O(N)/O(N^2) = O(1/N)$, which converges to 0. This implies that the random variable $\frac{\|\mathbf{S}_\varepsilon\|^2}{S_c^2}$ converges to 0 in probability.

By the Continuous Mapping Theorem, $\zeta_a$ converges in probability to $1$. As $\zeta_a$ is bounded in $[-1, 1]$, the Bounded Convergence Theorem ensures that convergence in probability to a constant implies convergence in expectation. Therefore:
\begin{equation}
\lim_{M\to\infty} \mathbb{E}[\zeta_a] = 1.
\end{equation}
\end{proof}


\section{Proof of Proposition~\ref{cor:intra-alignmen-conflict}}
\label{app:proof-intra-alignment-coro}

\begin{customproposition}{2}[Intra-alignment Conflict]
\label{cor:intra-alignmen-conflict}

The expected intra-alignment conflict, $\mathbb{E}[\chi_a]$, is governed by $M$ and the average pairwise alignment \(\bar{\mu} = \mathbb{E}[\mathbf{z}_i^{(m)\top} \mathbf{z}_i^{(n)}] \in [0, 1] \) for \(m \neq n\) between modalities:
\begin{equation}
\label{eq:conflict_scaling_law}
\mathbb{E}[\chi_a] \ \ge\ 1 - \sqrt{\frac{1+(M-2)\bar{\mu}}{M-1}}.
\end{equation}
For imperfect alignment (\(\bar{\mu} < 1\)), the conflict increases with the number of modalities \(M\) and admits a non-zero asymptotic lower bound:
\begin{equation}
\liminf_{M \to \infty} \mathbb{E}[\chi_a] \ \ge\ 1 - \sqrt{\bar{\mu}}.    
\end{equation}

\end{customproposition}

\begin{proof}
The intra-alignment conflict for an anchor modality $a$ is defined as:
\[\chi_a \triangleq 1-\frac{\|\mathbf{V}_a\|_2}{\sum_{n\neq a} w_{an}/\tau}\]
where the alignment force is $\mathbf{V}_a = \sum_{n\neq a} \frac{w_{an}}{\tau} \mathbf{z}_i^{(n)}$. To derive the fundamental scaling relationship with the number of modalities $M$, we make a simplifying assumption of uniform weighting, i.e., $w_{an}/\tau = 1$ for all $n \neq a$. Under this assumption,
\begin{equation}
\chi_a = 1 - \frac{\|\mathbf{V}_a\|_2}{M-1}, 
\qquad 
\mathbf{V}_a = \sum_{n\neq a} \mathbf{z}_i^{(n)}.
\end{equation}
Let $N=M-1$. Then
\begin{equation}
\|\mathbf{V}_a\|_2^2 
= \sum_{n=1}^N \sum_{m=1}^N \mathbf{z}_i^{(n)\top}\mathbf{z}_i^{(m)}
= N + \sum_{n\neq m}\mathbf{z}_i^{(n)\top}\mathbf{z}_i^{(m)}.
\end{equation}
Taking expectations and using $\bar{\mu}=\mathbb{E}[\mathbf{z}_i^{(n)\top}\mathbf{z}_i^{(m)}]$ for $n\neq m$,
\begin{equation}
\mathbb{E}\big[\|\mathbf{V}_a\|_2^2\big]
= N + N(N-1)\bar{\mu}
= (M-1)\big(1+(M-2)\bar{\mu}\big).
\end{equation}
By Jensen's inequality, 
\begin{equation}
\mathbb{E}\big[\|\mathbf{V}_a\|_2\big]
\ \le\
\sqrt{\mathbb{E}\big[\|\mathbf{V}_a\|_2^2\big]}
=
\sqrt{(M-1)\big(1+(M-2)\bar{\mu}\big)}.
\end{equation}
Hence
\begin{equation}
\mathbb{E}[\chi_a]
= 1 - \frac{\mathbb{E}\|\mathbf{V}_a\|_2}{M-1}
\ \ge\
1 - \sqrt{\frac{1+(M-2)\bar{\mu}}{M-1}},
\end{equation}
which proves \eqref{eq:conflict_scaling_law}. Finally,
\begin{equation}
\lim_{M\to\infty} \sqrt{\frac{1+(M-2)\bar{\mu}}{M-1}}
= \sqrt{\bar{\mu}}
\quad\Rightarrow\quad
\liminf_{M\to\infty}\mathbb{E}[\chi_a]\ \ge\ 1-\sqrt{\bar{\mu}}.
\end{equation}
\end{proof}

\section{Generalized Hölder Divergence}
\label{app:divergence}

\paragraph{KDE estimation of the global Hölder divergence.}
Let $\{p_m(\mathbf{z})\}_{m=1}^M$ be the (unknown) continuous densities of $M$ modalities and
\begin{equation}
\begin{aligned}
D_{\mathrm{H\ddot{o}lder}}
\;&=\;
-\log\frac{\int \prod_{m=1}^M p_m(\mathbf{z})\,d\mathbf{z}}
{\left(\prod_{m=1}^M \int p_m(\mathbf{z})^M\,d\mathbf{z}\right)^{1/M}}
\;\\
&=\;
\frac{1}{M}\sum_{m=1}^M \log\!\left(\int p_m(\mathbf{z})^M\,d\mathbf{z}\right)
\;-\;\log\!\left(\int \prod_{m=1}^M p_m(\mathbf{z})\,d\mathbf{z}\right),
\label{eq:holder-div-app}
\end{aligned}
\end{equation}
which is nonnegative by Hölder’s inequality and equals $0$ iff the Hölder inequality is tight.

\medskip

For modality $m$, let $\{\mathbf{z}_k^{(m)}\}_{k=1}^B$ be a batch of embeddings on $\mathbb{R}^d$ and define the kernel density estimator (KDE)
\begin{equation}
\widehat{p}_m(\mathbf{z})
\;=\;
\frac{1}{B}\sum_{k=1}^B K_\tau\!\big(\mathbf{z},\mathbf{z}_k^{(m)}\big),
\qquad
K_\tau(\mathbf{z},\mathbf{z}')
\;=\;
\frac{1}{(2\pi\tau^2)^{d/2}}\,
\exp\!\Big(-\tfrac{\|\mathbf{z}-\mathbf{z}'\|_2^2}{2\tau^2}\Big),
\label{eq:kde-def}
\end{equation}
with bandwidth $\tau>0$.\footnote{If one uses the \emph{unnormalized} kernel 
$\kappa(\mathbf{z},\mathbf{z}')=\exp(-\|\mathbf{z}-\mathbf{z}'\|_2^2/(2\tau^2))$,
then $\widehat{p}_m$ is scaled by a constant depending on $(d,\tau)$.
This yields an \emph{additive constant} in $D_{\mathrm{H\ddot{o}lder}}$ that does not affect optimization; we drop such constants in practice.}

\medskip

Using $\int p_m^M = \mathbb{E}_{Z\sim p_m}\!\big[p_m(Z)^{M-1}\big]$ and
$\int\prod_{m=1}^M p_m = \mathbb{E}_{Z\sim p_1}\!\big[\prod_{m=2}^M p_m(Z)\big]$,
we obtain Monte-Carlo plug-in estimators by sampling from the empirical $p_m$ via $\{\mathbf{z}_j^{(m)}\}$ and evaluating the KDEs:
\begin{align}
\int \widehat{p}_m(\mathbf{z})^M\,d\mathbf{z}
&=
\mathbb{E}_{Z\sim \widehat{p}_m}\!\big[\widehat{p}_m(Z)^{M-1}\big]
\;\approx\;
\frac{1}{B}\sum_{j=1}^B
\Biggl(\frac{1}{B}\sum_{k=1}^B K_\tau\!\big(\mathbf{z}_j^{(m)},\mathbf{z}_k^{(m)}\big)\Biggr)^{M-1},
\label{eq:kde-marg-final}
\\
\int \prod_{m=1}^M \widehat{p}_m(\mathbf{z})\,d\mathbf{z}
&=
\mathbb{E}_{Z\sim \widehat{p}_1}\!\Big[\prod_{m=2}^M \widehat{p}_m(Z)\Big]
\;\approx\;
\frac{1}{B}\sum_{j=1}^B \prod_{m=2}^M
\Biggl(\frac{1}{B}\sum_{k=1}^B K_\tau\!\big(\mathbf{z}_j^{(1)},\mathbf{z}_k^{(m)}\big)\Biggr).
\label{eq:kde-joint-final}
\end{align}
Equivalently, with the unnormalized Gaussian kernel $\kappa$ (dropping constants), the formulas above match
\begin{equation}
\int p_m^M \approx \frac{1}{B}\sum_{j=1}^B \Bigl(\frac{1}{B}\sum_{k=1}^B \kappa(\mathbf{z}_j^{(m)},\mathbf{z}_k^{(m)})\Bigr)^{M-1},
\qquad
\int \prod_{m=1}^M p_m \approx \frac{1}{B}\sum_{j=1}^B \prod_{m=2}^M \Bigl(\frac{1}{B}\sum_{k=1}^B \kappa(\mathbf{z}_j^{(1)},\mathbf{z}_k^{(m)})\Bigr).
\end{equation}




Define the (within-modality) kernel means
$s_i^{(m)} \triangleq \frac{1}{B}\sum_{k=1}^B K_\tau(\mathbf{z}_i^{(m)},\mathbf{z}_k^{(m)})$
and the (cross-modality-to-anchor) kernel means
$c_i \triangleq \prod_{m=2}^M \frac{1}{B}\sum_{k=1}^B K_\tau(\mathbf{z}_i^{(1)},\mathbf{z}_k^{(m)})$.
Then the Hölder divergence estimator (up to an additive constant if using $\kappa$) is
\begin{equation}
\widehat{D}_{\mathrm{H\ddot{o}lder}}
\;=\;
\frac{1}{M}\sum_{m=1}^M
\log\!\Bigg(\frac{1}{B}\sum_{i=1}^B \big(s_i^{(m)}\big)^{\,M-1}\Bigg)
\;-\;
\log\!\Bigg(\frac{1}{B}\sum_{i=1}^B c_i\Bigg).
\label{eq:holder-kde-estimator}
\end{equation}
All terms are differentiable; the computation costs $O(MB^2)$ and can be vectorized via kernel matrices
$K^{(m)}_{ij}=K_\tau(\mathbf{z}_i^{(m)},\mathbf{z}_j^{(m)})$ and
$K^{(m\to 1)}_{ij}=K_\tau(\mathbf{z}_i^{(1)},\mathbf{z}_j^{(m)})$.

\section{Design Space for Uniformity and Alignment}
\label{app:design-space}

\begin{table}[t]
\centering
\caption{\textbf{Design space for uniformity and alignment}. Uniformity can be instantiated in Euclidean or manifold geometries; alignment can incorporate geometric constraints beyond pairwise distance.}
\label{tab:uniformity-alignment-design}
\resizebox{\linewidth}{!}{%
\begin{tabular}{@{}l l m{0.35\linewidth} m{0.35\linewidth}@{}}
\toprule
\textbf{Principle} & \textbf{Space} & \textbf{Kernel/Metric} & \textbf{Notes} \\
\midrule
\multirow{2}{*}{\begin{tabular}[c]{@{}l@{}}Uniformity\\(repulsion)\end{tabular}}
& Euclidean $(\mathbb{R}^d)$
& $
\exp\!\Bigl(-{\|\mathbf{z}_i^{(m)}-\mathbf{z}_j^{(m)}\|_2^2}/{2\tau^2}\Bigr)
$
& Gaussian kernel in $\mathbb{R}^d$; encourages spread. \\[0.8em]

& Unit Hypersphere $(\mathbb{S}^{d-1})$
& $\displaystyle
\exp\!\Bigl(-{d_{\mathbb{S}}(\mathbf{z}_i^{(m)},\mathbf{z}_j^{(m)})^2}/{2\tau^2}\Bigr)$
& Geodesic (Riemannian) Gaussian on $\mathbb{S}^{d-1}$. \\

\midrule
\multirow{2}{*}{\begin{tabular}[c]{@{}l@{}}Alignment\\(attraction)\end{tabular}}
& Euclidean $(\mathbb{R}^d)$
& $\displaystyle
\|\mathbf{z}_i^{(m)}-\mathbf{z}_i^{(n)}\|_2^2
$
& Pairwise matching per sample. \\[0.8em]

& Unit Hypersphere $(\mathbb{S}^{d-1})$
& $\displaystyle
\bigl[d_{\mathbb{S}}(\mathbf{z}_i^{(m)},\mathbf{z}_i^{(n)})\bigr]^2
$
& Geodesic pairwise alignment. \\[0.8em]
& Area/volume preservation
& $\displaystyle
\sqrt{\det \mathbf{G}(\mathbf{z}^{(1)},\ldots,\mathbf{z}^{(M)}}
$
& Penalizes global volume; $
\mathbf{G}$ is the Gram Matrix. \\
\bottomrule
\end{tabular}%
}
\label{table:principle}
\end{table}

We illustrate some possible designs following our principle in Table~\ref{table:principle}, showing the generality and flexibility of our framework.

\section{Computational Complexity}

Our framework is formulated upon the core principles of feature alignment and uniformity, which are fundamental to the contrastive learning objective. The implementation of our method operates within the standard computational pipeline of modern contrastive learning. For a given batch of size \(B\) with \(d\)-dimensional representations, the dominant computational cost remains the construction of the \(B \times B\) pairwise similarity matrix, an operation with \(O(B^2 d)\) time complexity. The objectives of alignment and uniformity are then computed based on this matrix, inheriting the same computational profile as the loss calculation and gradient backpropagation stages of standard contrastive methods.

Crucially, our formulation does not require any operations beyond those already present in the baseline. As such, our method introduces no additional computational overhead and shares an identical time and memory complexity profile with widely-used InfoNCE-based frameworks. This efficiency ensures our approach is scalable and readily applicable to large-scale training regimes.

\section{Experimental Details and Additional Experiments}
\label{app:experiments}

\subsection{Implementation Details}
\label{sec:impl}

All experiments use {4$\times$ NVIDIA A6000} GPUs. We train with {AdamW} using a learning rate of \(\mathbf{2\times10^{-5}}\) and a batch size of {128 per GPU} (global batch \(=512\)). For zero-shot video retrieval, we use the GRAM~\citep{cicchetti2024gramian} codebase and only replace GRAM's volume-based InfoNCE with our uniformity and alignment losses. We follow their settings: each video clip is sampled with {8 frames} during training, and the model is trained for {5 epochs}. For cross-modal generation, we train on {VGGSound} for {50 epochs}. For hyperparameters, we use \(\lambda_{\mathrm{align}} = 1\) and set the temperature of uniformity loss as \(\tau_{\mathrm{ctr}} = 0.07\), the same as in standard CLIP. We use a separate temperature $\tau_{\mathrm{ctr}}=0.07$ for tuple-level uniformity $U(\mathbf{C})$. 


\subsection{Empirical evidence of InfoNCE conflict}
\label{app:evidence-conflict}

\begin{figure*}[htbp]
  \centering  
  \includegraphics[width=0.85\textwidth]{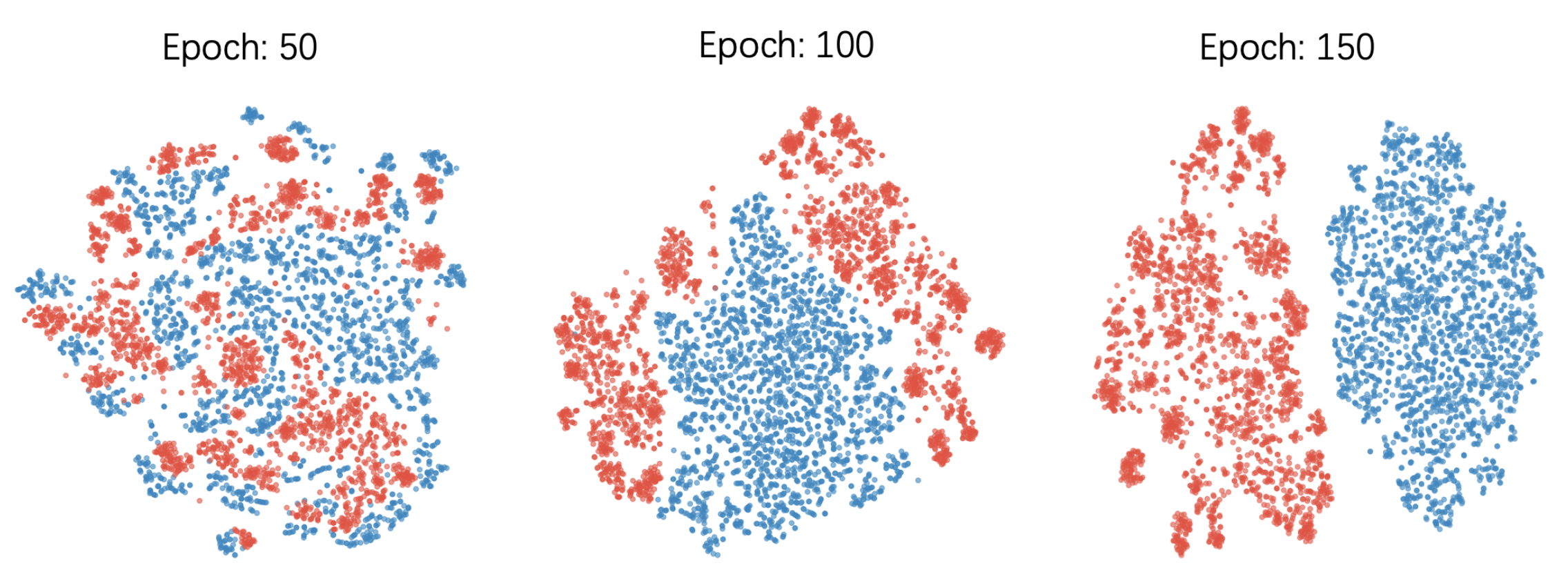} 
\caption{\textbf{t-SNE visualization of image and text embeddings across training epochs when optimizing an InfoNCE-based text-to-image binding.} The two modalities move closer at early stages but later separate again, suggesting inconsistent distributional alignment.} 
\label{fig:evidence-conflict}
\end{figure*}

To further support the theoretical analysis of the InfoNCE conflict and to motivate our method, we provide an empirical observation from training dynamics.
Specifically, we train a lightweight 5-layer Transformer adapter on top of the CLIP ViT-L, optimized with the standard InfoNCE objective.
Following the ImageBind-style setup, we treat the CLIP image embeddings as fixed targets and learn to bind text embeddings to the image embedding space.
We train on the MSCOCO training set and visualize the learned representations using t-SNE on 5K image-text pairs from the validation split, saved every 50 epochs.
As shown in Fig.~\ref{fig:evidence-conflict}, the two modalities become closer at early stages of training, but the clusters later separate again.
While t-SNE is qualitative, this phenomenon suggests that InfoNCE may not consistently reduce the distributional mismatch between modalities and can even widen the modality gap at later stages.
This observation is consistent with the analysis in \citet{yin2025distributional}, which links weakened alignment to the evolution of the (learnable) temperature during training.

\subsection{More Ablation Studies}
\label{app:ablation}

\noindent{\textbf{Sensitivity of temperature in $U(\mathbf{C})$}.} 
As we mentioned above, we use the temperature $\tau=0.07$ for the uniformity loss. However, a separate temperature $\tau_{\mathrm{ctr}}$ for the tuple-level uniformity loss $U(\mathbf{C})$ could control the global separability. Hence, we perform an ablation study on this parameter. Table~\ref{label:abl-tau} shows that the average performance is robust to the temperature $\tau$, while controlling centroid $\tau$ may affect the subtask performance (T2V and V2T).  

\begin{table}[t]
\centering
\setlength{\tabcolsep}{10pt}
\caption{Ablation on centroid uniformity temperature $\tau_{\mathrm{ctr}}$ on MSR-VTT retrieval (Recall@1, \%).}
\label{tab:msrvtt_ctr_tau_r1}
\begin{tabular}{lccc}
\toprule
$\tau_{\mathrm{ctr}}$ & T2V R@1 & V2T R@1 & Avg R@1 \\
\midrule
0.01 & 56.0 & 54.4 & 55.2 \\
0.03 & 56.8 & 53.2  & 55.0  \\
0.07 & 57.4 & 53.2 & 55.3  \\
\bottomrule
\end{tabular}
\vspace{-2mm}
\label{label:abl-tau}
\end{table}

\noindent{\textbf{Sensitivity to $\lambda_{\mathrm{align}}$ and from-scratch training.}}
We study the sensitivity of the alignment weight $\lambda_{\mathrm{align}}$ on MSR-VTT using our basic instantiation, which combines modality-wise uniformity with an explicit alignment regularizer:
\begin{equation}
\label{eq:lambda_align_ablation}
\mathcal{L}
=  \sum\nolimits_{m=1}^{M} U\!\big(Z^{(m)}\big)
+ \lambda_{\mathrm{align}}\, L_{\mathrm{align}} .
\end{equation}
In this ablation, we keep all other training settings fixed and vary only $\lambda_{\mathrm{align}}$ to isolate its effect.
In addition, we conduct this experiment with a small-scale training-from-scratch setup, demonstrating that our objective remains effective without large-scale pretraining.
For evaluation, we perform cross-modal retrieval using cosine similarity between normalized embeddings and report standard MSR-VTT retrieval metrics.
Overall, the results show that the proposed objective is stable across a broad range of $\lambda_{\mathrm{align}}$ and exhibits the expected trade-off: increasing $\lambda_{\mathrm{align}}$ strengthens cross-modal alignment, while overly large values can reduce within-modality uniformity and degrade retrieval performance.

\begin{table}[htbp]
\centering
\caption{\textbf{Sensitivity of $\lambda_{\mathrm{align}}$}.}
\label{tab:lambda_align}
\begin{tabular}{lccc}
\toprule
$\lambda_{\mathrm{align}}$ & T2V & V2T & Avg \\
\midrule
0.2 &	23.8	&26.8	& 25.3 \\
0.5 & 26.2 & 29.4 & 27.8 \\
1.0 & 23.1 & 28.6 & 25.85 \\
1.5 & 19.6 & 19.3 & 19.45 \\
2.0 & 13.1 & 13.6 & 13.35 \\
\bottomrule
\end{tabular}
\end{table}

\noindent{\textbf{$D_{\mathrm{H\ddot{o}lder}}$ curve during training.}}
To demonstrate that our principle serves as an efficient proxy for global distribution estimation, we additionally track the evolution of the cross-modal divergence $D_{\mathrm{H\ddot{o}lder}}$ on MSR-VTT throughout training.
Concretely, we train the model under the same setting as in the cross-modal generation experiments, and every 5 epochs we compute $D_{\mathrm{H\ddot{o}lder}}$ between the image, text, and audio embedding distributions using the KDE estimator in Eq.~\eqref{eq:holder-kde-estimator}.
Fig.~\ref{fig:divergence_curve} shows that the global Hölder divergence decreases rapidly during training as our method effectively closes the distribution gaps among the three modalities.
Notably, $D_{\mathrm{H\ddot{o}lder}}$ quickly approaches zero, suggesting that directly optimizing this divergence can be problematic in practice (e.g., due to early saturation and weak gradients), motivating our proxy objective instead. 

\begin{figure*}[htbp]
  \centering  \includegraphics[width=0.5\textwidth]{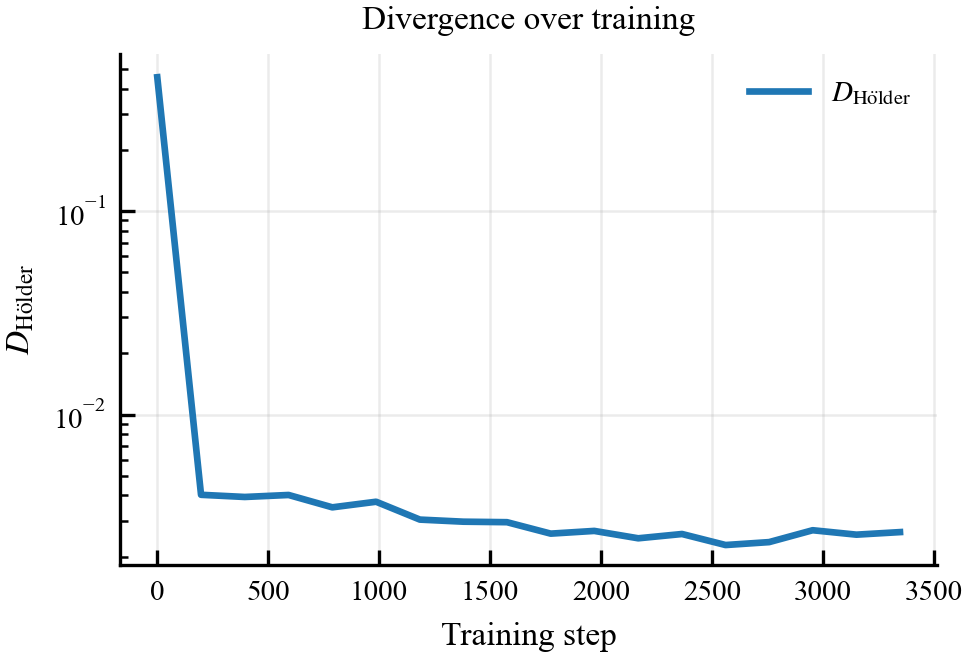} 
\caption{\textbf{$D_{\mathrm{H\ddot{o}lder}}$ curve during training.}} 
\label{fig:divergence_curve}
\end{figure*}

\subsection{Experimental Results on Modality Interpolation}

\begin{figure*}[htbp]
  \centering  \includegraphics[width=1\textwidth]{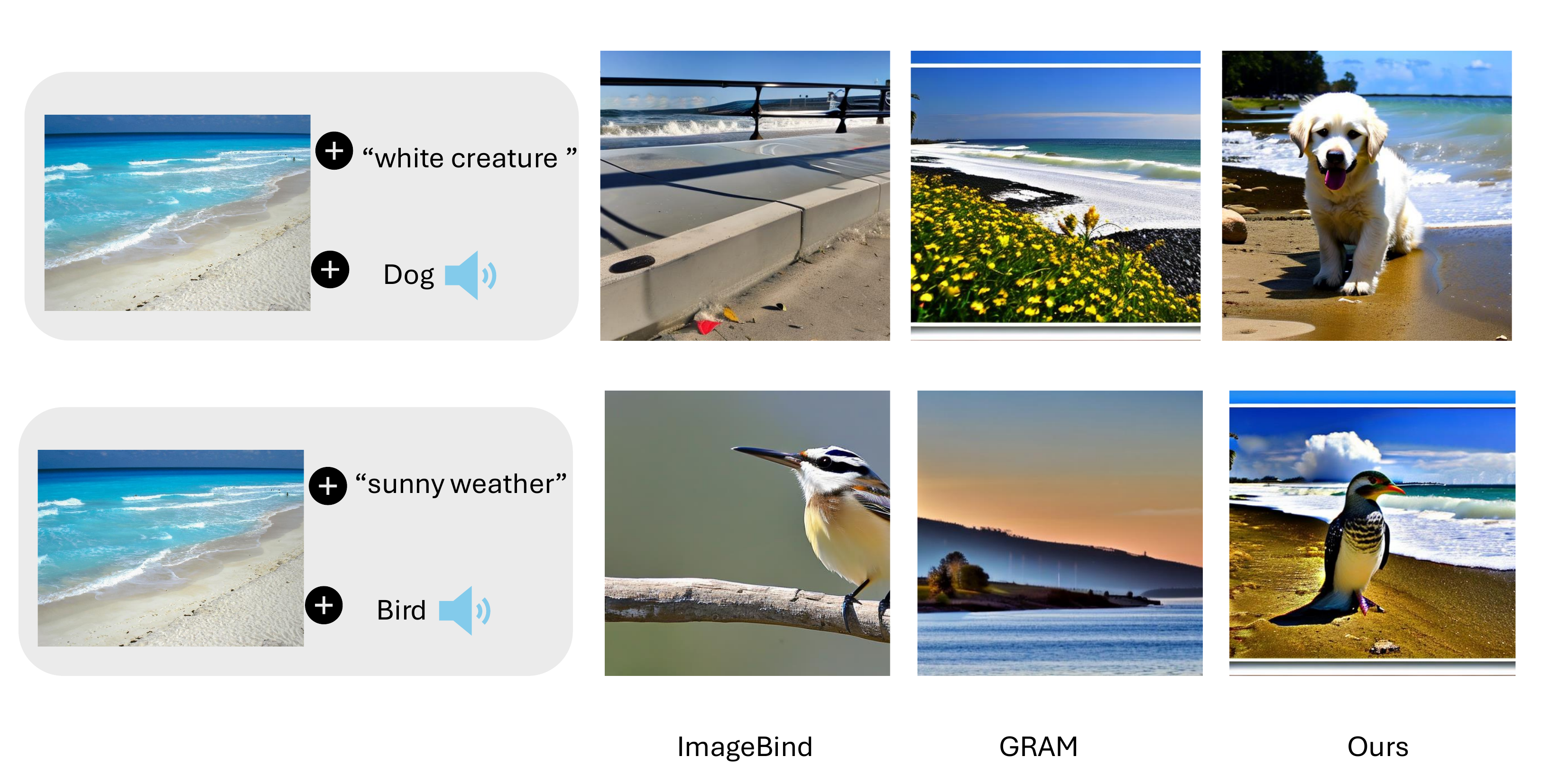} 
\caption{\textbf{Modality-interpolation generation results (V+T+A) $\to$ I.} When interpolating, vision, text, and audio representations, our method has a better ability to fuse the semantic information across modalities, leading to better generation. } 
\label{fig:fig-inter-tri}
\end{figure*}

Beyond the bimodal cases in Fig.~\ref{fig:fig-inter}, we present tri-modal interpolation results. Conditioning jointly on an image embedding, a text prompt, and an audio embedding, our model synthesizes images that integrate complementary semantics from all three modalities, demonstrating effective cross-modal fusion.

\subsection{Generation Results of VGGSound}

We present more generated samples from VGGSound in Fig.~\ref{fig:more-vggsound}. Note that the image quality of VGGSound's videos is quite noisy, making the generation results similar. Also, we adopt a raw generation process for this demonstration, where the embeddings are directly passed to the decoder without additional conditions (e.g., negative prompts or quality-enhancing constraints). This can directly reflect the goodness of the multimodal alignment without external factors. 

\begin{figure*}[htbp]
  \centering  \includegraphics[width=0.8\textwidth]{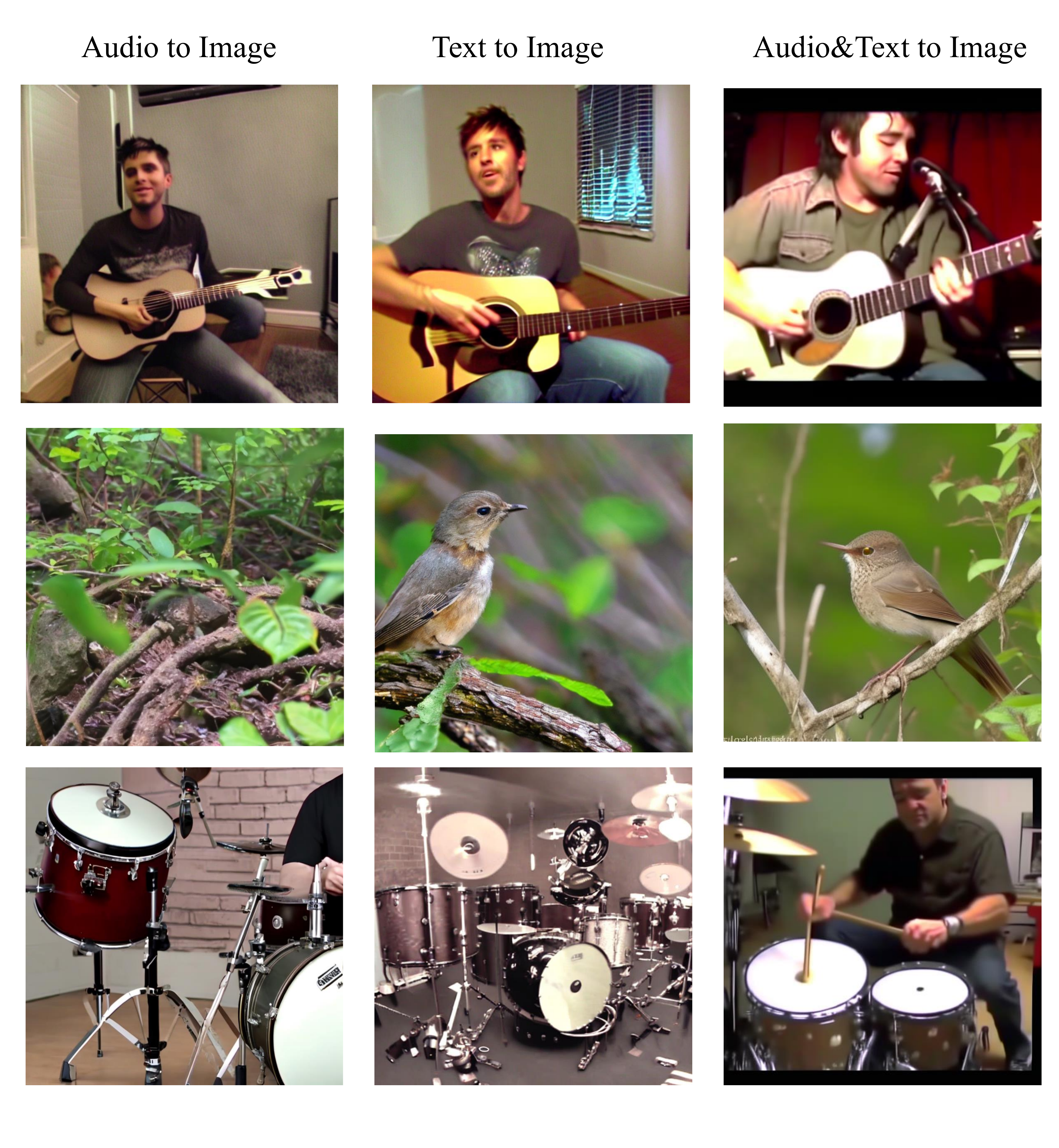} 
\caption{\textbf{More generated results from VGGsound.} We adopt a raw generation process for demonstrating the multimodal alignment ability, where the embeddings are directly passed to the decoder without additional conditions (e.g., negative prompts or quality-enhancing constraints)} 
\label{fig:more-vggsound}
\end{figure*}

\end{document}